\documentclass[12pt]{article}

\usepackage{url}            %
\usepackage{booktabs}       %
\usepackage{amsfonts}       %
\usepackage{nicefrac}       %
\usepackage{microtype}      %
\usepackage{xcolor}         %
\usepackage{mathtools}
\mathtoolsset{showonlyrefs}
\usepackage{bm}
\usepackage{bbm}
\usepackage{tocloft}
\usepackage{enumerate}
\usepackage{footnote}
\usepackage{amssymb}
\usepackage{amsmath}
\usepackage{mathrsfs}
\usepackage{caption}
\usepackage{subcaption}
\usepackage{dsfont}
\usepackage{adjustbox}
\usepackage{array}
\usepackage{colortbl}
\usepackage{graphicx}
\usepackage{multirow, hhline}
\usepackage{natbib}
\usepackage[ruled,linesnumbered]{algorithm2e}

\usepackage{hyperref}
\hypersetup{ hidelinks }

\allowdisplaybreaks
\usepackage{amsmath}
\makeatletter
\let\save@mathaccent\mathaccent
\newcommand*\if@single[3]{%
  \setbox0\hbox{${\mathaccent"0362{#1}}^H$}%
  \setbox2\hbox{${\mathaccent"0362{\kern0pt#1}}^H$}%
  \ifdim\ht0=\ht2 #3\else #2\fi
  }
\newcommand*\rel@kern[1]{\kern#1\dimexpr\macc@kerna}
\newcommand*\widebar[1]{\@ifnextchar^{{\wide@bar{#1}{0}}}{\wide@bar{#1}{1}}}
\newcommand*\wide@bar[2]{\if@single{#1}{\wide@bar@{#1}{#2}{1}}{\wide@bar@{#1}{#2}{2}}}
\newcommand*\wide@bar@[3]{%
  \begingroup
  \def\mathaccent##1##2{%
    \let\mathaccent\save@mathaccent
    \if#32 \let\macc@nucleus\first@char \fi
    \setbox\z@\hbox{$\macc@style{\macc@nucleus}_{}$}%
    \setbox\tw@\hbox{$\macc@style{\macc@nucleus}{}_{}$}%
    \dimen@\wd\tw@
    \advance\dimen@-\wd\z@
    \divide\dimen@ 3
    \@tempdima\wd\tw@
    \advance\@tempdima-\scriptspace
    \divide\@tempdima 10
    \advance\dimen@-\@tempdima
    \ifdim\dimen@>\z@ \dimen@0pt\fi
    \rel@kern{0.6}\kern-\dimen@
    \if#31
      \overline{\rel@kern{-0.6}\kern\dimen@\macc@nucleus\rel@kern{0.4}\kern\dimen@}%
      \advance\dimen@0.4\dimexpr\macc@kerna
      \let\final@kern#2%
      \ifdim\dimen@<\z@ \let\final@kern1\fi
      \if\final@kern1 \kern-\dimen@\fi
    \else
      \overline{\rel@kern{-0.6}\kern\dimen@#1}%
    \fi
  }%
  \macc@depth\@ne
  \let\math@bgroup\@empty \let\math@egroup\macc@set@skewchar
  \mathsurround\z@ \frozen@everymath{\mathgroup\macc@group\relax}%
  \macc@set@skewchar\relax
  \let\mathaccentV\macc@nested@a
  \if#31
    \macc@nested@a\relax111{#1}%
  \else
    \def\gobble@till@marker##1\endmarker{}%
    \futurelet\first@char\gobble@till@marker#1\endmarker
    \ifcat\noexpand\first@char A\else
      \def\first@char{}%
    \fi
    \macc@nested@a\relax111{\first@char}%
  \fi
  \endgroup
}
\makeatother

\newcommand{\tP}{\mathbb{P}}
\newcommand{\tE}{\mathbb{E}}

\newcommand{\bdeltaks}[1]{\bm{\delta}^{(#1)*}}

\newcommand{\btheta}{\bm{\theta}}

\newcommand{\bthetaks}[1]{\bm{\theta}^{(#1)*}}

\newcommand{\hthetak}[1]{\widehat{\bm{\theta}}^{(#1)}}
\newcommand{\bbeta}{\bm{\beta}}
\newcommand{\bbetaks}[1]{\bm{\beta}^{(#1)*}}

\newcommand{\hbetak}[1]{\widehat{\bm{\beta}}^{(#1)}}

\newcommand{\bA}{\bm{A}}
\newcommand{\bB}{\bm{B}}

\newcommand{\hbarA}{\widehat{\widebar{\bm{A}}}}

\newcommand{\bAks}[1]{\bm{A}^{(#1)*}}

\newcommand{\oA}{\widebar{\bm{A}}}

\newcommand{\Abarhat}{\widehat{\widebar{\bm{A}}}}

\newcommand{\thetahatk}[1]{\widehat{\bm{\theta}}^{(#1)}}

\newcommand{\thetahatt}{\widehat{\bm{\theta}}^{(t)}}

\newcommand{\Abarhatthetahatt}{\Abarhat \thetahatk{t}}

\newcommand{\bx}{\bm{x}}

\newcommand{\bxk}[1]{\bm{x}^{(#1)}}
\newcommand{\yk}[1]{y^{(#1)}}
\newcommand{\bXk}[1]{\bm{X}^{(#1)}}
\newcommand{\bYk}[1]{\bm{Y}^{(#1)}}

\newcommand{\bSigmak}[1]{\bm{\Sigma}^{(#1)}}

\newcommand{\hSigmak}[1]{\widehat{\bm{\Sigma}}^{(#1)}}
\newcommand{\fk}[1]{f^{(#1)}}

\newcommand{\epsilonk}[1]{\epsilon^{(#1)}}
\newcommand{\mO}{\mathcal{O}^{p \times r}}

\newcommand{\mQ}{\mathbb{Q}}

\newcommand{\sigmamin}{\sigma_{\min}}

\newcommand{\sigmamax}{\sigma_{\max}}

\newcommand{\wpr}{w.p. at least $1-e^{-C(r+\log T)}$}
\newcommand{\wpp}{w.p. at least $1-e^{-C(p+\log T)}$}
\newcommand{\wppp}{w.p. at least $1-e^{-C'(p+\log T)}$}

\newcommand{\barzeta}{\bar{\zeta}}
\newcommand{\zetak}[1]{\zeta^{(#1)}}
\newcommand{\zetabar}{\bar{\zeta}}

\makeatletter
\newcommand*{\Rom}[1]{\expandafter\@slowromancap\romannumeral #1@}
\makeatother
\newcommand*{\rom}[1]{\romannumeral #1}

\DeclareMathOperator*{\argmin}{arg\,min}

\newcommand{\twonorm}[1]{\|#1\|_{2}}

\usepackage[margin=1in]{geometry}
\usepackage{amsmath, amssymb, amsthm}
\usepackage{mathtools}
\usepackage{hyperref}

\newtheorem{theorem}{Theorem}
\newtheorem{lemma}[theorem]{Lemma}
\newtheorem{corollary}[theorem]{Corollary}
\newtheorem{proposition}[theorem]{Proposition}
\newtheorem{assumption}{Assumption}
\newtheorem{definition}{Definition}
\newtheorem{remark}[theorem]{Remark}

\newcommand{\R}{\mathbb{R}}
\newcommand{\E}{\mathbb{E}}

\newcommand{\norm}[1]{\left\lVert#1\right\rVert}
\newcommand{\opnorm}[1]{\left\lVert#1\right\rVert_{\mathrm{op}}}
\newcommand{\fnorm}[1]{\left\lVert#1\right\rVert_{\mathrm{F}}}

\newcommand{\matr}[1]{\bm{#1}}

\newcommand{\Aout}{\matr{A}_{\text{out}}}

\newcommand{\Bhatst}{\widehat{\bm{B}}_{\text{st}}}

\usepackage{xcolor} %
\definecolor{deepred}{RGB}{139, 0, 0} %

\usepackage{mathtools}
\DeclarePairedDelimiter{\card}{\lvert}{\rvert}

\newcommand{\bN}{\bm{N}}
\newcommand{\bP}{\bm{P}}

\newcommand{\sigmaminin}{\sigma_{\min,\text{in}}}

\newcommand{\bBst}{\widehat{\bB}_{\textup{st}}}

\newcommand{\st}{\textup{st}}

\newcommand{\bD}{\bm{D}}

\newcommand{\abs}[1]{\left\lvert #1 \right\rvert} %

\newcommand{\cardS}{\card{S}}

\usepackage{graphicx}   %
\usepackage{caption}    %
\usepackage{subcaption} %
\graphicspath{{figure/}} %

\usepackage{authblk}

\title{Robust and Adaptive Spectral Method for Representation Multi-Task Learning with Contamination}

\author[1]{Yian Huang\thanks{\texttt{huang.yian@columbia.edu}}}
\author[2]{Yang Feng\thanks{\texttt{yang.feng@nyu.edu}}}
\author[1]{Zhiliang Ying\thanks{\texttt{zying@stat.columbia.edu}}}

\affil[1]{Department of Statistics, Columbia University}
\affil[2]{Department of Biostatistics, School of Global Public Health, New York University}

\date{\today}

\begin{document}

\maketitle

\begin{abstract}
Representation-based multi-task learning (MTL) improves efficiency by learning a shared structure across tasks, but its practical application is often hindered by contamination, outliers, or adversarial tasks. Most existing methods and theories assume a clean or near-clean setting, failing when contamination is significant. This paper tackles representation MTL with an unknown and potentially large contamination proportion, while also allowing for heterogeneity among inlier tasks. 
We introduce a Robust and Adaptive Spectral method (RAS) that can distill the shared inlier representation effectively and efficiently, while requiring no prior knowledge of the contamination level or the true representation dimension. Theoretically, we provide non-asymptotic error bounds for both the learned representation and the per-task parameters. These bounds adapt to inlier task similarity and outlier structure, and guarantee that RAS performs at least as well as single-task learning, thus preventing negative transfer. We also extend our framework to transfer learning with corresponding theoretical guarantees for the target task. Extensive experiments confirm our theory, showcasing the robustness and adaptivity of RAS, and its superior performance in regimes with up to 80\% task contamination.
\end{abstract}

\section{Introduction}

Multi–task learning (MTL) improves statistical efficiency and predictive accuracy by sharing information across related tasks, instead of learning each task in isolation 
\citep{Caruana1997,Baxter2000,EvgeniouPontil2004,AndoZhang2005}. Classical MTL couples task predictors via shared regularizers, task clustering, or learned task relationships \citep{Argyriou2008,TraceNormMTL2009,JacobBachVert2008,KumarDaume2012,XueLiaoCarinKrishnapuram2007}. In parallel, {representation learning} provides transferable, low-dimensional structure that can dramatically reduce sample complexity across tasks \citep{BengioCourvilleVincent2013,LeCunBengioHinton2015}. These threads meet in {representation-based MTL}, which assumes task parameters lie approximately in a shared low-rank subspace, enabling joint estimation of a common representation and per–task coefficients \citep{Argyriou2008,maurer2016benefit,ZhangYang2017,Crawshaw2020}.

Despite the provable gains over single-task learning delivered by representation-based MTL 
under ideal scenarios
\citep{maurer2016benefit,tripuraneni2020theory,thekumparampil2021statistically,Saunshi2021}, in practice, however, modern pipelines aggregate large, heterogeneous task collections harvested automatically. 
As a result, outlier tasks, contaminated tasks, or even tasks subject to adversary attack may account for non-trivial proportion among all tasks in the MTL.\footnote{In this paper, we do not distinguish between outlier tasks, contaminated tasks and adversarily attacked tasks, given their similarity. We shall use these words interchangeably in the paper.}

For example, in the application of 
computer vision and autonomous driving,  multi-task networks jointly predict depth, surface, and instance segmentation from shared backbones \citep{SenerKoltun2018,Kendall2018Uncertainty,Standley2020Which}. At scale, tasks harvested from different cities, sensors, or weather regimes can be weakly related, corrupted (e.g., due to miscalibrated LiDAR, systematic label noise, or even adversarially attacked).
Such {task-level} anomalies act as outliers that can distort the shared representation if not robustly handled.
In genomics and regulatory modeling, multi-assay prediction across cell types/assays naturally forms an MTL setting (e.g., multi-label regulatory prediction) \citep{ZhouTroyanskaya2015DeepSEA,Kelley2016Basset}. Cross-lab batch effects and mislabeled or low-quality cohorts can yield contaminated tasks \citep{Johnson2007ComBat,Leek2010SVA}. A robust shared representation is essential to avoid negative transfer from corrupted assays or atypical cell types.
In healthcare scenarios, survival models are often trained across multiple centers with heterogeneous coding practices, small cohorts, or population shifts \citep{Ranganath2016DeepSurvival,Katzman2018DeepSurv,AlaaSchaar2018DeepHit}. Erroneous or atypical sites behave like outlier or contaminated tasks. Robust representation learning is needed to stabilize hazard estimation while preserving site-specific adaptations.

These practical scenarios raises persistent challenges and requirements to  guard against {negative transfer} when some tasks are weakly related or harmful \citep{PanYang2010Survey,weiss2016survey,Rosenstein2005Negative,Standley2020Which}.
In the literature, however, robustness to {contaminated  tasks} that violate the shared-structure assumption is less explored. 
\cite{du2020few,tripuraneni2020theory,tripuraneni2021provable,niu2024collaborative,thekumparampil2021statistically} assume the absence of contaminated tasks in the MTL, with  exactly the same shared representation. 
\cite{chua2021fine,duan2023adaptive} follow the assumption of absence of outliers, while allowing similar but not exactly the same  shared representation among tasks. 
\cite{TianGuFeng2023LearningFromSimilar} relaxes the stringent assumption about outliers, proposing an algorithm that allows some small contamination proportion. 
Some sparsity and decomposition formulations are considered with the aim of detecting and isolating outlier tasks while sharing features among inliers \citep{jalali2010dirty, gong2012robust}, but they require solving computationally complicated optimization, and only tolerate some small proportion of outliers.

In this paper, we study 
representation MTL with unknown  and potentially large contamination proportion where the contaminated data can follow arbitrary distribution, while allowing heterogeneity of representations among tasks.
We propose a robust and adaptive spectral method (RAS).
Specifically,
RAS adopts 
a {data-driven singular-value threshold} calibrated to the perturbation level, retaining directions whose signal exceeds the heterogeneity and noise floor. A final biased regularization step anchors task estimates to the learned subspace while preserving per–task adaptivity. 
RAS  requires neither the knowledge of  intrinsic dimension nor the contamination proportion, sufficiently exploits the approximate low-rank structure of the outliers if available, and also gracefully
adapts to general-rank outlier structure. 
Besides, RAS remains tuning-light and computationally efficient, facilitating its applications in practical scenarios. 
We derive the theoretical guarantees about the representation estimation error, and per-task coefficient estimation error. In particular, our non-asymptotic error bounds adapt to the inlier similarity and outlier structure, 
which facilitates its 
superior performance
in the existence of large contamination proportion. Also, RAS avoids the negative transfer, with a safe fallback matching the single-task error rates.
We also demonstrate the application of RAS in the transfer learning (TL), and show the theoretical guarantees for the estimation error of a target task.
Besides, we  conduct extensive numerical experiments that corroborate our theoretical findings. In particular, the numerical experiments test contamination proportion in a large regime, with up to $80\%$ contaminated tasks, to showcase the robustness and adaptivity of RAS, and its  superior performance compared with other methods that require the information about oracle intrinsic dimension and contamination proportion.

\subsection{Related Literature}

MTL improves sample efficiency by exploiting relatedness among tasks \citep{Caruana1997,Baxter2000}. Early regularization or kernal approaches couple tasks through shared regularizers or output kernels \citep{EvgeniouPontil2004,EvgeniouMicchelliPontil2005,Argyriou2008}, group feature sharing \citep{Lounici2011}, and task-relationship or clustering structure to ``share with the right neighbors" \citep{JacobBachVert2008,KangGraumanSha2011,ZhouChenYe2011,BarzilaiCrammer2015}. Surveys synthesize benefits and risks, especially {negative transfer} when relatedness assumptions fail \citep{ZhangYang2017,Ruder2017}.

A central viewpoint posits that task parameters lie approximately in a low-dimensional shared representation. Convex trace-norm models \citep{Argyriou2008} and structural learning from multiple tasks (often with unlabeled data) \citep{AndoZhang2005} formalized this intuition. Generalization theory quantifies when shared representations beat single-task baselines as a function of intrinsic dimension, number of tasks, and per-task samples \citep{maurer2016benefit}. 
Recent representation MTL or meta-learning literature analyze learning under exact or near-shared subspaces but typically assume no outlier tasks or only small contamination proportion \citep{du2020few,tripuraneni2020theory,tripuraneni2021provable,thekumparampil2021statistically,TianGuFeng2023LearningFromSimilar,duan2023adaptive,niu2024collaborative}. \citet{gu2024robust} considers a special case of $1$-dim representation.

Robust losses and estimators 
control the effect of heavy tails and corrupted labels within a task and can be integrated into multi-task objectives \citep{Huber1964,Catoni2012,Minsker2015,LugosiMendelson2019}. These techniques improve per-task estimation without directly addressing task-level outliers that violate shared-representation assumptions.

When entire tasks are contaminated or weakly related, the shared representation can be corrupted. One line decomposes the coefficient matrix into a shared low-rank component plus sparse task-specific deviations 
\citep{jalali2010dirty}, which can flag anomalous tasks but typically assumes small contamination proportion and requires heavier optimization. A related line comes from robust PCA, which separates low-rank signal from sparse outliers with convex and  nonconvex algorithms \citep{CandesRPCA2011,xu2012robust,Netrapalli2014}. These results clarify when a shared subspace can be recovered under small corruptions.

When tasks are domains, theory for multi-source domain adaptation and domain generalization clarifies when combining sources helps a target and why invariant representations matter \citep{MansourMohriRostamizadeh2009,BenDavid2010,Muandet2013}. In decentralized settings, federated MTL frameworks support personalized yet shared structure under heterogeneity \citep{Smith2017}, while Byzantine-robust aggregation 
protects against adversarial or corrupted clients \citep{BlanchardEtAl2017,YinChenKannanRamchandran2018}. These threads are complementary to representation robustness: they mitigate {where} gradients/updates come from, not {what} representation is estimated.

\subsection{Notations and Organization}

Throughout the paper we adopt the following conventions. Probabilities and expectations are written as $\tP$ and $\tE$. For positive sequences $\{a_n\}$ and $\{b_n\}$, we write $a_n=o(b_n)$ (or $a_n\ll b_n)$ when $a_n/b_n\to 0$; $a_n=O(b_n)$ (or $a_n\lesssim b_n$) when there exists a universal constant $C<\infty$ with $a_n/b_n\le C$; and $a_n\asymp b_n$ when both $a_n/b_n$ and $b_n/a_n$ are bounded by some universal constant $C<\infty$. 
For a random variable $x_n$ and positive numbers $a_n$, the relation $x_n=\mathcal{O}_{\tP}(a_n)$ means: for every $\epsilon>0$ there exists $M<\infty$ such that $\sup_n \tP(|x_n|> M a_n)\le \epsilon$.
For a vector $\bx\in\mathbb{R}^d$, we denote its Euclidean norm by $\twonorm{\bx}$. For scalars $a,b$, set $a\vee b=\max\{a,b\}$ and $a\wedge b=\min\{a,b\}$. For any $K\in\mathbb{N}$, we use $[K]=\{1,\dots,K\}$ and $1\!:\!K$ for the standard index set. If $S$ is a set, then $|S|$ denotes its cardinality and $S^{\mathrm{c}}$ its complement. “w.p.” means “with probability.” Absolute constants $c, c', c'', c_1, c_2$ and $C,C',C'',C_1,C_2$ may change from line to line.

The rest of the paper is organized as follows.
Section~\ref{sec:setup-and-algo} presents the problem setup and the RAS algorithm. Section~\ref{sec:theory} states our non-asymptotic theoretical guarantees.
Section~\ref{sec:TL} demonstrates the application of our RAS algorithm in the transfer learning, and its theoretical ganrantees.
Section~\ref{sec:experiment} reports empirical results, followed by Section~\ref{sec:discussion} that concludes the paper.
All the proofs and supplementary lemmas are deferred to the Appendix.

\section{Problem Setup and Algorithm}
\label{sec:setup-and-algo}

\subsection{Problem Setup}\label{subsec: setup}

In this section, we formalize the MTL problem setup investigated in the paper. Consider $T$ supervised regression tasks. For task $t \in [T]$, we observe i.i.d. samples $\{(\bxk{t}_i,\yk{t}_i)\}_{i=1}^n$ with covariates $\bxk{t}_i \in \mathbb{R}^p$ and responses $\yk{t}_i \in \mathbb{R}$.
The $T$ tasks are partitioned into $S$ and its complement $S^c$. Tasks in $S$ are called {inlier} tasks, which share closely related representations, whereas tasks in $S^c$ serve as {outlier}, {contaminated} tasks, or adversarially attacked tasks,
for which we impose no distributional structure. 
Let $\varepsilon=\frac{\card{S^c}}{T}$ be the contamination proportion. We allow $\varepsilon$ to be very large, with the trivial requirement that $\varepsilon < 1.$
Note that the inlier subset $S \subseteq [T]$ is unknown. 

For each $t \in S$, a linear model is assumed:
\begin{equation}\label{eq:linear model}
  \yk{t}_i = (\bxk{t}_i)^\top \bbetaks{t} + \epsilonk{t}_i,\qquad i=1,\ldots,n,
\end{equation}
where $\bbetaks{t} = \bAks{t}\bthetaks{t}$ for some $\bAks{t}\in \mO := \{\bA\in\mathbb{R}^{p\times r}:\bA^\top\bA=\bm I_r\}$ and a low-dimensional parameter $\bthetaks{t}\in\mathbb{R}^r$ with $r\le p$. The noise variables $\{\epsilonk{t}_i\}_{i=1}^n$ are mean-zero, sub-Gaussian, and independent of $\{\bxk{t}_i\}_{i=1}^n$.
Without loss of generality, 
take $\bxk{t}$ to be mean-zero with covariance $\bSigmak{t} := \mathbb{E}[\bxk{t} (\bm{x}^{(t)})^{\top}]$. 
Let 
$\mQ_{S^c}$ denote the joint law of $\{\{(\bxk{t}_i,\yk{t}_i)\}_{i=1}^n\}_{t\in S^c}$. 
Note that no structural assumptions are imposed on outlier tasks: they can follow any distribution.
For notational convenience, define the coefficient matrix $\bB_S^* \in \mathbb{R}^{p\times |S|}$ whose $t$-th column is $\bbetaks{t}$ for $t\in S$. Write
\[
\zetak{t} := \twonorm{\bthetaks{t}} = \twonorm{\bbetaks{t}}, 
\qquad 
\bar\zeta^2 := 
|S|^{-1} \sum_{t\in S} (\zetak{t})^2,
\]
and assume the signal-noise-ratio satisfies $\min_{t\in S}\zetak{t}\gtrsim \sqrt{(p+\log T)/n}$.

We quantify representation similarity among $\{\bAks{t}\}_{t\in S}$ using principal angles between the corresponding column spaces. Specifically, we assume there exists $h\in[0,1]$ such that
\begin{equation}\label{eq:A similarity}
  \min_{\oA\in \mO}\ \max_{t\in S}\ \twonorm{\bAks{t}(\bAks{t})^\top - \oA\oA^\top} \le h.
\end{equation}
Smaller $h$ indicates greater similarity; $h=0$ recovers the ``shared representation’’ setting studied in \citet{du2020few,tripuraneni2021provable}. The spectral norm of the projector difference in~\eqref{eq:A similarity} corresponds to the largest principal angle between the two subspaces \citep{wedin1972perturbation,chenchifan2021spectralmethodsfor}.

We make the following standard assumptions.

\begin{assumption}
\label{assump:covariate}
For any task $t \in S$, the feature vectors $\bm{x}_i^{(t)}$ are i.i.d. sub-Gaussian random vectors.
The covariance matrix is $\matr{\Sigma}^{(t)} = \E[\bm{x}_i^{(t)}(\bm{x}_i^{(t)})^\top]$, and we assume $0 < c_{\min} \le \lambda_{\min}(\matr{\Sigma}^{(t)}) \le \lambda_{\max}(\matr{\Sigma}^{(t)}) \le c_{\max} < \infty$ for all $t \in S$. 
\end{assumption}

\begin{assumption}
\label{assump:diversity}
Let $\bm{\Theta}^*_S = [\bm{\theta}^{(t){*}}]_{t \in S} $.
There exist 
$\{\bthetaks{t} \}_{t \in S}$ such that
$\sigma_r(\bm{\Theta}^*_S/\sqrt{\card{S}}) \ge \sigma_{\min,\text{in}} > 0$.
\end{assumption}

\begin{assumption}\label{assump:n}
	$n \geq C(p+ \log T)$ with a  universal constant $C > 0$.
\end{assumption}

\begin{remark}
Note that Assumptions \ref{assump:covariate} and \ref{assump:n} are mild and widely adopted in the literature \citep{du2020few, duan2023adaptive, TianGuFeng2023LearningFromSimilar}.
Assumption \ref{assump:diversity} is often called the inlier task diversity assumption, and also widely adopted in the literature \citep{du2020few,chua2021fine,tripuraneni2021provable,duchi2022subspace,TianGuFeng2023LearningFromSimilar,niu2024collaborative}.
This assumption ensures that each direction of the shared latent representation is sufficiently explored and represented, 
which is necessary for the representation learning.
\end{remark}

Before presenting the algorithm, we introduce some notations.
Let
$\hbetak{t}_\st$ be the single-task coefficient estimate, and $\widehat{\bB}_\st = \{ \hbetak{t}_{\textup{st}} \}_{t \in T}.$
Let $\bB^*_S = \{ \bbetaks{t} \}_{t \in S}$, and 
$\widebar{\bB}_S=\{ \oA \oA^\top \bbetaks{t} \}_{t \in S}.$
Define a matrix 
$\widetilde{\bB}$ with its columns 
$\widetilde{\bB}_{:,t}=\oA \oA^\top \bbetaks{t}$
if $t \in S$, and $\widetilde{\bB}_{:,t}=
\hbetak{t}_{\st}$ if 
$t \in S^c.$
Denote the perturbation matrix $\bN =\Bhatst - \widetilde{\bB}$. 
Let 
$
\bdeltaks{t} = \oA^\perp(\oA^\perp)^\top
\bbetaks{t},
$
and 
$\bD_S=\{ \bdeltaks{t} \}_{t \in S}$.
Thus,
$\bN_S=(\Bhatst)_S - \bm{B}_S^*+ \bm{D}_S^*$.

\subsection{Our Algorithm: Robust and Adaptive Spectral Method}

\begin{algorithm}[!h]
\caption{Robust Adaptive Spectral Method (RAS)}
\label{algo:ras}
\KwIn{Data $\{\bXk{t}, \bYk{t}\}_{t=1}^T = \{\{\bxk{t}_i, \yk{t}_i\}_{i=1}^n\}_{t=1}^T$, penalty parameter $\gamma$, threshold $\tau$.
}
\KwOut{Estimators $\{\hbetak{t}\}_{t=1}^T, \hbarA$}
\underline{Step 1:} (Single-task regression) $\widehat{\bbeta}^{(t)}_{\text{st}} = \argmin_{\bbeta \in \mathbb{R}^p}\big\{ \fk{t}(\bbeta)\big\}$ for $t \in [T]$. Create a $p \times T$ matrix $\widehat{\bB}_{\text{st}}$ of which the $t$-th column is $\widehat{\bbeta}^{(t)}_{\text{st}}$ \\
\underline{Step 2:} (SVD) Conduct SVD $\frac{1}{\sqrt T} \widehat{\bB}_{\text{st}} = \widehat{\bm{U}}\widehat{\bm{\Lambda}}\widehat{\bm{V}}^\top$ with $\widehat{\bm{U}} \in \mathcal{O}^{p \times p}$.
Set $\hat{k} = \max\Big\{k \in [T]: \sigma_{k}(\widehat{\bB}_{\text{st}}/\sqrt{T}) \geq \tau$ \Big\}.
let $\widehat{\oA}$ be the first $\hat{k}$ columns of $\widehat{\bm{U}}$, and set $\hthetak{t} = \argmin_{\btheta \in \mathbb{R}^{\hat{k}}}\fk{t}(\widehat{\oA}\btheta)$ \\
\underline{Step 3:} (Biased regularization) $\hbetak{t} = \argmin_{\bbeta \in \mathbb{R}^p} \big\{\fk{t}(\bbeta) + \frac{\gamma}{\sqrt{n}}\twonorm{\bbeta - \widehat{\oA}\hthetak{t}}\big\}$ for $t \in [T]$
\end{algorithm}

In this section, we introduce our proposed algorithm: robust and adaptive spectral  (RAS) method. 
When there is no contamination, and each task is assumed to share a similar low dimensional representation as in \eqref{eq:A similarity}, singular value decomposition (SVD) serves as a natural option for estimation of the shared representation. 
However, when there is unknown and potentially large contamination proportion, and especially when the data of contaminated tasks can follow arbitrary distribution, the estimated subspace of SVD can be significantly impaired, especially given the vulnerability of spectral methods to outliers or contamination. 
However, we use an adaptive thresholding method, and show that the estimated subspace from our proposed RAS can still well capture the true inlier subspace, and thus lead to superior performance in the contaminated MTL problem, despite the existence of large contamination proportion, and arbitrary structure of contaminated tasks.

Our RAS method is demonstrated in Algorithm \ref{algo:ras}. Step 1 of RAS simply fits  single-task regressions, and forms the estimated coefficient matrix $\bBst$,
where  $\fk{t}(\bbeta)$ is the loss function. In the context of linear regression, $\fk{t}(\bbeta) 
= \frac{1}{2n}\sum_{i=1}^n [\yk{t}_i - (\bxk{t}_i)^\top\bbeta]^2$ for $\bbeta \in \mathbb{R}^p$.
The step 2 of RAS conducts SVD on the scaled estimated coefficient matrix $\frac{1}{\sqrt{T}} \bBst$, and uses the threshold $\tau$ to select the singular values and the corresponding singular vectors. Crucially, the threshold $\tau$ is set as $\tau \asymp \opnorm{\bm{N}}$, and in particular, for inlier tasks with linear model \eqref{eq:linear model}, it is set as:
$$
    \tau \asymp
    \sqrt{\frac{p+\cardS}{nT}}
    + h \barzeta \sqrt{1-\varepsilon}
    \bigg[\frac{\sigmamax(\bm{D}^*_S)}{\sqrt{r}\sigmamin(\bm{D}^*_S)} \wedge 1\bigg].
    $$
This step filters out the perturbation to $\widebar{\bB}_S$, but keeps the true signals.
Step 3 of RAS implements a regular biased regularization step, which uses task-specific data to fit per-task coefficient estimates, while adding a penalty term that involves the estimated coefficients using the shared representation from Step 2.
Such a regularization step is often used in the literature to prevent negative transfer \citep{scholkopf2001generalized,kuzborskij2013stability, duan2023adaptive,TianGuFeng2023LearningFromSimilar}.

\begin{remark}
    Note that our RAS method also applies to general inlier models beyond linear regression. And the threshold $\tau$ can still be set as $ \tau \asymp \opnorm{\bN}$, where the expression of $\opnorm{\bN}$ depends on the specific inlier model.
\end{remark}

\begin{remark}
    Our proposed RAS has major differences with the   spectral method (SM) proposed in \citet{TianGuFeng2023LearningFromSimilar}.
    SM assumes the knowledge of the true intrinsic dimension $r$, and uses the oracle $r$ in the algorithm and proof. 
    They proposed another algorithm \citep[Algorithm 3]{TianGuFeng2023LearningFromSimilar}  to recover the true $r$ with high probability under some assumptions \citep[Theorem 9]{TianGuFeng2023LearningFromSimilar}.
    
    In contrast, our RAS has a different design philosophy.
    With the presence of unknown and potentially large contamination proportion, the subspace estimate from the SVD may be severely ruined. In this case, even if the knowledge of the true intrinsic dimension $r$ is known, we choose not to use it. 
    Instead, we uses an adaptive thresholding method with a proper threshold to make sure the true inlier signals are mostly recovered 
    in the subspace  estimate, which makes the RAS method truly robust and adaptive to the unknown  and potentially large contamination proportion.

    Besides, the SM uses a winsorization step to alleviate the impact of outliers: if the $l_2$-norm of the coefficient estimate $\hbetak{t}$ is among the upper $\widebar{\varepsilon}$ of all $T$ tasks, it is winsorized.
    However,
    when the magnitude of the $\hbetak{t}$ for $t \in S^c$ is small, this winsorization procedure truncates the inlier coefficient estimates $\hbetak{t}$, and thus ruins the  inlier signal estimates,  as we will show in the  numerical experiments.
    Also, this step requires the knowledge of the true contamination proportion, or its upper bound, which, in most practical applications, is unknown.
    Our RAS does not require such knowledge, and is robust regardless of the magnitude of the $\hbetak{t}$ of contaminated tasks.
\end{remark}

\section{Theoretical Guarantees}
\label{sec:theory}
In this section, we present the theoretical guarantees for the proposed RAS method. 
We first introduce the notion of effective signal rank.

\begin{definition}[Effective Signal Rank]
Let $\widetilde{\lambda_j} = \sigma_j(\widetilde{\bB} / \sqrt{T})$ be the $j$-th singular value of the scaled matrix. Given the  threshold $\tau$, we define the  {effective signal rank} as:
\begin{equation}
    \widetilde{r}_{\text{eff}}
    = \max \{j \in [T] : 
    \widetilde{\lambda_j }
    > 1.25 \tau \}.
\end{equation}
\end{definition}

    The effective signal rank $\widetilde{r}_{\text{eff}}$ represents the number of signals of the {entire} task collection whose  strength is large enough to be reliably detected. In the proposition below, we show that under a mild spectrum gap condition, the estimated rank is equal to the effective signal rank with high probability.

\begin{proposition}
\label{thm:rank_general}
Let Assumption \ref{assump:covariate}-\ref{assump:n} hold. 
Let
$\tau = C'\left(
\sqrt{\frac{p+\cardS}{nT}}
    + h \barzeta \sqrt{1-\varepsilon}
\bigg[\frac{\sigmamax(\bm{D}^*_S)}{\sqrt{r}\sigmamin(\bm{D}^*_S)} \wedge 1\bigg] \right)$
with a sufficiently large constant 
$C'$.
Assume a gap in the true spectrum: 
$
\widetilde{\lambda}_{r_{\text{eff}}+1}
< 0.75 \tau$.
Then \wpp, the estimated rank from  RAS is
\begin{equation}
\hat{k} = \widetilde{r}_{\text{eff}}.
\end{equation}
\end{proposition}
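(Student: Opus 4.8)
The plan is to treat this as a singular-value perturbation problem and reduce it to a single deterministic inequality via Weyl's inequality. Write $\widehat{\lambda}_j = \sigma_j(\Bhatst/\sqrt{T})$ and $\widetilde{\lambda}_j = \sigma_j(\widetilde{\bB}/\sqrt{T})$ for the ordered singular values of the scaled observed and target matrices. Since $\Bhatst/\sqrt{T} - \widetilde{\bB}/\sqrt{T} = \bN/\sqrt{T}$, Weyl's inequality for singular values gives the deterministic bound $\max_j |\widehat{\lambda}_j - \widetilde{\lambda}_j| \le \opnorm{\bN}/\sqrt{T}$. The entire argument then rests on showing that this perturbation is small relative to $\tau$, namely that $\opnorm{\bN}/\sqrt{T} \le 0.25\,\tau$ on a high-probability event.

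The probabilistic core --- and the main obstacle --- is establishing $\opnorm{\bN}/\sqrt{T} \le 0.25\,\tau$ \wpp. Because the outlier columns of $\bN$ vanish by construction ($\widetilde{\bB}_{:,t} = \hbetak{t}_\st$ for $t\in S^c$), it suffices to bound $\opnorm{\bN_S}$, and I would split $\bN_S = \big((\Bhatst)_S - \bm{B}_S^*\big) + \bm{D}_S^*$ along the two sources of error. The first term collects the per-task single-task estimation errors; each column is a sub-Gaussian vector with covariance on the order of $(\bSigmak{t})^{-1}/n$, so a standard operator-norm concentration bound for a $p\times\cardS$ random matrix yields $\opnorm{(\Bhatst)_S-\bm{B}_S^*}/\sqrt{T}\lesssim \sqrt{(p+\cardS)/(nT)}$ with the claimed exponential probability (this is where the $e^{-C(p+\log T)}$ failure probability enters, through Assumptions~\ref{assump:covariate} and~\ref{assump:n}). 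The second term is deterministic: from the similarity bound~\eqref{eq:A similarity} each $\bdeltaks{t}=\oA^\perp(\oA^\perp)^\top\bbetaks{t}$ satisfies $\twonorm{\bdeltaks{t}}\lesssim h\zetak{t}$, giving $\fnorma{\bm{D}_S^*}/\sqrt{T}\lesssim h\barzeta\sqrt{1-\varepsilon}$, and refining the passage from Frobenius to operator norm using the spectral profile of $\bm{D}_S^*$ produces the bracketed factor $[\sigmamax(\bm{D}^*_S)/(\sqrt r\,\sigmamin(\bm{D}^*_S))\wedge 1]$. Summing the two contributions recovers exactly the form of $\tau$, so choosing the constant $C'$ in $\tau$ large enough forces $\opnorm{\bN}/\sqrt{T}\le 0.25\,\tau$.

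On this event I would close the argument with a two-sided comparison engineered around the constants $1.25$ and $0.75$. For $j\le\widetilde{r}_{\text{eff}}$ the definition of the effective signal rank gives $\widetilde{\lambda}_j>1.25\,\tau$, hence $\widehat{\lambda}_j \ge \widetilde{\lambda}_j - 0.25\,\tau > \tau$, so every such index survives the threshold and $\hat{k}\ge\widetilde{r}_{\text{eff}}$. Conversely, the gap hypothesis $\widetilde{\lambda}_{\widetilde{r}_{\text{eff}}+1}<0.75\,\tau$ gives $\widehat{\lambda}_{\widetilde{r}_{\text{eff}}+1}\le\widetilde{\lambda}_{\widetilde{r}_{\text{eff}}+1}+0.25\,\tau<\tau$, and since the $\widehat{\lambda}_j$ are non-increasing, every index beyond $\widetilde{r}_{\text{eff}}$ falls below $\tau$, so $\hat{k}\le\widetilde{r}_{\text{eff}}$. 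The margins are chosen so that both inequalities land exactly at the cutoff $\tau$, and combining them yields $\hat{k}=\widetilde{r}_{\text{eff}}$ on the stated event. The only bookkeeping subtlety is that the gap is imposed on the true spectrum of $\widetilde{\bB}$, so this comparison step is entirely deterministic once the perturbation event holds, and no union bound beyond the single operator-norm event is required.
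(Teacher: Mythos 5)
Your proposal is correct and follows essentially the same route as the paper's proof: bound $\opnorm{\bN}/\sqrt{T}$ by $0.25\,\tau$ via the decomposition $\bN_S = ((\Bhatst)_S - \bm{B}_S^*) + \bm{D}_S^*$ (concentration for the estimation error plus the deterministic Frobenius-to-operator refinement for $\bm{D}_S^*$), then apply Weyl's inequality with the $1.25\,\tau$ / $0.75\,\tau$ margins on either side of $\widetilde{r}_{\text{eff}}$. The paper packages the concentration step as Lemma~\ref{lemma:noise_opnorm} rather than re-deriving it, but the argument is the same.
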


\begin{remark}
    When the spectral gap assumption 
$
\widetilde{\lambda}_{r_{\text{eff}}+1}
< 0.75\tau$
    does not hold, we define 
     $\hat{r}_{\text{eff}} = \max\Big\{j \in [T]: \sigma_{j}(\widetilde{\bB}
     /\sqrt{T}) > 
     0.75
     \tau \Big\}.$
     By the same argument as in the proof of Proposition \ref{thm:rank_general}, we have 
     \wpp , 
     $\hat{k} \leq \hat{r}_{\text{eff}}$.
\end{remark}

In the lemma below, we show that when the matrix of single-task coefficient estimates of outliers 
is composed of a low rank matrix and some small perturbation, the estimated rank $\hat{k}$ is  small.  
\begin{lemma}
\label{lemma:low rank outlier}
    Suppose the matrix of single-task coefficient estimates of outliers has the decomposition: 
    $$ 
    (\widehat{\bB}_{\textup{st}})_{S^c}
    = \widebar{\bB}_{S^c} + \widetilde{\bN},
    $$
    where 
    $\textup{rank}(\widebar{\bB}_{S^c})=r_{\textup{out}}$, and
    $\opnorm{\widetilde{\bN}} \leq \frac{3}{4}\tau$ is the perturbation matrix.
    Let $r_{\cap} :=  \textup{dim}(\textup{col}(\widebar{\bB}_S) 
    \cap 
    \textup{col}(\widebar{\bB}_{S^c}) ),$
    which equals the number of zero principal angles between 
    $\widebar{\bB}_S$ and 
    $\widebar{\bB}_{S^c}. $
    Then \wpp, 
    \begin{equation}
    \hat{k} \leq 
    r+ r_{\textup{out}}- r_{\cap}.
    \end{equation}
\end{lemma}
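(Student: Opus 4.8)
The plan is to exhibit the scaled matrix $\Bhatst/\sqrt{T}$ as a low-rank ``signal'' matrix plus a perturbation whose operator norm is strictly below $\tau$, and then use Weyl's inequality to count how many singular values can exceed $\tau$. Throughout I work with the $1/\sqrt{T}$-normalization used in the definition of $\hat{k}$, so that $\opnorm{\bN}$, $\opnorm{\widetilde{\bN}}$ and the cutoff $\tau$ all live on the same scale.

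\textbf{Signal/perturbation split and rank bound.} After reordering columns, write $\Bhatst = \bm{M} + \bm{E}$, where $\bm{M} = [\,\widebar{\bB}_S,\ \widebar{\bB}_{S^c}\,]$ stacks the projected inlier signal and the low-rank outlier component, and $\bm{E} = [\,\bN_S,\ \widetilde{\bN}\,]$. Here $\bN_S = (\Bhatst)_S - \widebar{\bB}_S$, and since $\bN$ vanishes on the $S^c$ columns we have $\opnorm{\bN} = \opnorm{\bN_S}$; the outlier block uses the hypothesized decomposition $(\Bhatst)_{S^c} = \widebar{\bB}_{S^c} + \widetilde{\bN}$. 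Because $\textup{col}(\bm{M}) = \textup{col}(\widebar{\bB}_S) + \textup{col}(\widebar{\bB}_{S^c})$, the subspace dimension formula gives
\[
\textup{rank}(\bm{M}) = \textup{dim}\,\textup{col}(\widebar{\bB}_S) + \textup{dim}\,\textup{col}(\widebar{\bB}_{S^c}) - \textup{dim}\big(\textup{col}(\widebar{\bB}_S)\cap\textup{col}(\widebar{\bB}_{S^c})\big).
\]
Since $\widebar{\bB}_S = \oA\oA^\top\bB_S^*$ has all columns in $\textup{col}(\oA)$, we get $\textup{dim}\,\textup{col}(\widebar{\bB}_S)\le r$; by hypothesis $\textup{dim}\,\textup{col}(\widebar{\bB}_{S^c}) = r_{\textup{out}}$ and the intersection has dimension $r_{\cap}$. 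Hence $\textup{rank}(\bm{M})\le r + r_{\textup{out}} - r_{\cap}$.

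\textbf{Perturbation below threshold and conclusion.} For a horizontal concatenation one has $\opnorm{[\bm{A},\bm{B}]}^2 \le \opnorm{\bm{A}}^2 + \opnorm{\bm{B}}^2$ (subadditivity of the top eigenvalue of $\bm{A}\bm{A}^\top + \bm{B}\bm{B}^\top$), so after scaling
\[
\opnorm{\bm{E}/\sqrt{T}}^2 \le \opnorm{\bN_S/\sqrt{T}}^2 + \opnorm{\widetilde{\bN}/\sqrt{T}}^2 \le \big(\tau/C'\big)^2 + \big(\tfrac34\tau\big)^2,
\]
where the second term is the hypothesis $\opnorm{\widetilde{\bN}}\le\tfrac34\tau$ and the first uses the concentration bound underlying the choice $\tau\asymp\opnorm{\bN}$: namely $\opnorm{\bN}\le\tau/C'$ holds \wpp, with $C'$ the large constant from the definition of $\tau$. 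For $C'$ large enough this yields $\opnorm{\bm{E}/\sqrt{T}} < \tau$. Writing $m := \textup{rank}(\bm{M})$ and applying Weyl's inequality for singular values,
\[
\sigma_{m+1}(\Bhatst/\sqrt{T}) \le \sigma_{m+1}(\bm{M}/\sqrt{T}) + \opnorm{\bm{E}/\sqrt{T}} = \opnorm{\bm{E}/\sqrt{T}} < \tau,
\]
because $\bm{M}/\sqrt{T}$ has rank $m$ so $\sigma_{m+1}(\bm{M}/\sqrt{T})=0$. As the singular values are nonincreasing, $\sigma_k(\Bhatst/\sqrt{T}) < \tau$ for all $k\ge m+1$, whence the largest index attaining the threshold obeys $\hat{k} \le m \le r + r_{\textup{out}} - r_{\cap}$.

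\textbf{Main obstacle.} The genuine work is purely the normalization bookkeeping: one must pin down the $1/\sqrt{T}$ scaling so that $\opnorm{\widetilde{\bN}}\le\tfrac34\tau$, the concentration bound on $\opnorm{\bN}$, and the cutoff $\tau$ (a threshold on $\sigma_k(\Bhatst/\sqrt{T})$) are all comparable, and then verify that $C'$ is large enough that the two contributions to $\opnorm{\bm{E}}$ combine to something strictly below $\tau$ — the factor $\tfrac34$ is precisely what leaves this room. All the probability (the $1-e^{-C(p+\log T)}$) is inherited from the concentration of $\opnorm{\bN}$ already used to justify $\tau\asymp\opnorm{\bN}$, so no new stochastic argument is required.
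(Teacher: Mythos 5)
Your proof is correct and follows essentially the same route as the paper's: identify the low-rank signal $[\widebar{\bB}_S,\widebar{\bB}_{S^c}]$ whose rank is at most $r+r_{\textup{out}}-r_\cap$ by the subspace dimension formula, push the perturbation below the cutoff, and invoke Weyl's inequality to bound the number of singular values exceeding $\tau$. The only (harmless) difference is that you absorb both noise sources into a single block perturbation $\bm{E}=[\bN_S,\widetilde{\bN}]$ controlled via $\opnorm{[\bm{A},\bm{B}]}^2\le\opnorm{\bm{A}}^2+\opnorm{\bm{B}}^2$, whereas the paper applies Weyl in two stages (first $\widebar{\bB}\to\widetilde{\bB}$ via $\widetilde{\bN}$, then implicitly $\widetilde{\bB}\to\Bhatst$ via the $0.25\tau$ bound on $\opnorm{\bN/\sqrt{T}}$ from Proposition~\ref{thm:rank_general}); your version in fact spells out the final passage to $\hat{k}$ that the paper leaves implicit.
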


    By Lemma \ref{lemma:low rank outlier}, if $r_{\textup{out}}-r_\cap \lesssim  r$, we have, $\hat{k} \lesssim r.$
    Besides, if the outliers share some basis with the inlier subspace $\oA$,  $\hat{k}$ will be smaller.
Note that in the worst case with adversarially contaminated $(\widehat{\bB}_{\textup{st}})_{S^c}$, we can choose $\widetilde{\bN}=\bm{0}$, and we still have $\hat{k} \leq  \min\{   r+ r_{\textup{out}}- r_{\cap}, r+T\varepsilon, p,T
\}.
$
Next, we show the subspace estimation error bound of the proposed RAS algorithm.

\begin{proposition}
\label{prop:subspace_error_bd_general}
    Suppose Assumptions \ref{assump:covariate}-\ref{assump:n} hold. 
    Let threshold 
    $$
    \tau \asymp
    \sqrt{\frac{p+\cardS}{nT}}
    + h \barzeta \sqrt{1-\varepsilon}
    \bigg[\frac{\sigmamax(\bm{D}^*_S)}{\sqrt{r}\sigmamin(\bm{D}^*_S)} \wedge 1\bigg]
    $$
    in the RAS algorithm.
    Then \wpp, 
    the following approximation error bound of the subspace estimate
    holds:
    \begin{equation}
    \label{eq:subspace_error1}
\opnorm{(\matr{I} - {\matr{P}}_{\Abarhat}) \oA} \lesssim 
\frac{1}
{\sigma_{\min,\text{in}} \sqrt{1-\varepsilon} } \sqrt{\frac{p+\cardS}{nT}}
+\frac{h\barzeta}
{\sigma_{\min,\text{in}}}    \bigg[\frac{\sigmamax(\bm{D}^*_S)}{\sqrt{r}\sigmamin(\bm{D}^*_S)} \wedge 1\bigg]
.
\end{equation}
\end{proposition}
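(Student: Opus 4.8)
The plan is to turn the one-sided bound $\opnorm{(\matr I-\matr{P}_{\Abarhat})\oA}$ into a single ratio, control its numerator by the thresholding rule of Step~2 and its denominator by inlier diversity, and then read off the two terms of~\eqref{eq:subspace_error1} by substituting the stated $\tau$. Throughout write $\hat{\matr P}:=\matr{P}_{\Abarhat}=\Abarhat\Abarhat^\top$, and recall that $\bN=\Bhatst-\widetilde{\bB}$ vanishes on the outlier columns, so $\opnorm{\bN}=\opnorm{\bN_S}$ with $\bN_S=(\Bhatst)_S-\widebar{\bB}_S$.

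\emph{Deterministic reduction.} First I would establish the inclusion inequality
\begin{equation}
\opnorm{(\matr I-\hat{\matr P})\oA}\le\frac{\opnorm{(\matr I-\hat{\matr P})\,\widebar{\bB}_S/\sqrt T}}{\sigma_r(\widebar{\bB}_S/\sqrt T)}.
\end{equation}
Because $\widebar{\bB}_S=\oA\oA^\top\bB^*_S$ has column space contained in $\mathrm{col}(\oA)$, and equal to it once $\sigma_r(\widebar{\bB}_S/\sqrt T)>0$, every unit vector $\bv\in\mathrm{col}(\oA)$ admits a representation $\bv=(\widebar{\bB}_S/\sqrt T)\bm c$ whose minimum-norm coefficient satisfies $\twonorm{\bm c}\le 1/\sigma_r(\widebar{\bB}_S/\sqrt T)$; hence $\twonorm{(\matr I-\hat{\matr P})\bv}\le\opnorm{(\matr I-\hat{\matr P})\widebar{\bB}_S/\sqrt T}\,\twonorm{\bm c}$, and taking the supremum over such $\bv$ gives the display. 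Writing $\widebar{\bB}_S=(\Bhatst)_S-\bN_S$ and using that $\hat{\matr P}$ projects onto the top-$\hat k$ left singular subspace of $\Bhatst/\sqrt T$, the first summand obeys $\opnorm{(\matr I-\hat{\matr P})(\Bhatst)_S/\sqrt T}\le\opnorm{(\matr I-\hat{\matr P})\Bhatst/\sqrt T}=\sigma_{\hat{k}+1}(\Bhatst/\sqrt T)<\tau$, the last step being precisely the selection rule $\hat k=\max\{k:\sigma_k(\Bhatst/\sqrt T)\ge\tau\}$ (passing to a column-submatrix cannot increase the operator norm); the second summand is at most $\opnorm{\bN_S/\sqrt T}$. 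Thus the numerator is $\lesssim\tau+\opnorm{\bN_S/\sqrt T}$.

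For the denominator I would combine Assumption~\ref{assump:diversity} with the similarity bound~\eqref{eq:A similarity}: since $\sigma_r(\widebar{\bB}_S)=\sigma_r(\oA^\top\bB^*_S)$ and each $\oA^\top\bAks{t}$ has singular values within $O(h)$ of one, an alignment-plus-Weyl argument transfers $\sigma_r(\bTheta^*_S/\sqrt{\cardS})\ge\sigmaminin$ into $\sigma_r(\widebar{\bB}_S/\sqrt T)\gtrsim\sigmaminin\sqrt{1-\varepsilon}$ (using $\cardS/T=1-\varepsilon$). The probabilistic heart is then the bound $\opnorm{\bN_S/\sqrt T}\lesssim\tau$, obtained by splitting $\bN_S=[(\Bhatst)_S-\bB^*_S]+\bD^*_S$: the single-task block is controlled by a sub-Gaussian operator-norm concentration that, after linearizing the ordinary-least-squares error, \wpp\ gives $\opnorm{(\Bhatst)_S-\bB^*_S}\lesssim\sqrt{(p+\cardS)/n}$, while the heterogeneity block uses $\twonorm{\bdeltaks{t}}\le h\zetak{t}$, hence $\fnorm{\bD^*_S/\sqrt T}\le h\barzeta\sqrt{1-\varepsilon}$, refined to $h\barzeta\sqrt{1-\varepsilon}\,[\sigmamax(\bD^*_S)/(\sqrt r\,\sigmamin(\bD^*_S))]$ by combining $\opnorm{\bD^*_S}\le\fnorm{\bD^*_S}$ with $\fnorm{\bD^*_S}\ge\sqrt r\,\sigmamin(\bD^*_S)$; taking the minimum of the two recovers the $[\,\cdot\,\wedge 1]$ factor. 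Dividing the numerator bound $\lesssim\tau$ by the denominator $\gtrsim\sigmaminin\sqrt{1-\varepsilon}$ and substituting the stated $\tau$ reproduces the two terms of~\eqref{eq:subspace_error1}.

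I expect the delicate step to be the operator-norm concentration for $(\Bhatst)_S-\bB^*_S$: obtaining the sharp joint scaling $\sqrt{(p+\cardS)/n}$ over both the $p$-dimensional and the $\cardS$-dimensional directions (rather than a looser $\sqrt{p\cardS}/\sqrt n$) while uniformly controlling the task-wise inverse sample covariances, and matching the refined heterogeneity factor instead of the crude $h\barzeta\sqrt{1-\varepsilon}$. The subspace-geometry steps are essentially deterministic once these two quantitative inputs are in place; note in particular that the numerator bound does not require the exact rank identity $\hat k=\widetilde r_{\text{eff}}$, only the selection inequality $\sigma_{\hat k+1}(\Bhatst/\sqrt T)<\tau$ built into Step~2.
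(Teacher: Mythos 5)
Your proof is correct and takes essentially the same route as the paper's: both split $\Bhatst$ into the inlier signal $\widebar{\bB}_S$ plus the perturbation $\bN_S$, bound $\opnorm{(\matr I-\matr{P}_{\Abarhat})\Bhatst}$ by $\sigma_{\hat k+1}(\Bhatst)<\sqrt{T}\tau$ via the thresholding rule, control $\opnorm{\bN_S/\sqrt{T}}$ by the single-task OLS concentration plus the $\fnorm{\bD^*_S}$-with-refinement bound, and divide by $\sigma_r$ of the inlier signal matrix (your column-space/minimum-norm-coefficient reduction is equivalent to the paper's lower bound $\opnorm{((\Abarhat^\perp)^\top\oA)\bTheta^*_S}\ge\opnorm{(\Abarhat^\perp)^\top\oA}\,\sigma_r(\bTheta^*_S)$). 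The only cosmetic difference is that you pass through $\sigma_r(\widebar{\bB}_S/\sqrt{T})$ with an explicit alignment step, whereas the paper identifies $\widetilde{\bB}_S$ directly with $\oA\bTheta^*_S$ and invokes Assumption~\ref{assump:diversity}.
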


\begin{corollary}
       Suppose Assumptions \ref{assump:covariate}-\ref{assump:n} hold.
        Let threshold 
    $$
    \tau \asymp
    \sqrt{\frac{p+\cardS}{nT}}
    + h \barzeta \sqrt{1-\varepsilon}
    \bigg[\frac{\sigmamax(\bm{D}^*_S)}{\sqrt{r}\sigmamin(\bm{D}^*_S)} \wedge 1\bigg]
    $$
    in the RAS algorithm.
       If we take the usual assumption that $\sigma_{\min,\text{in}} \geq \frac{c}{\sqrt r} \barzeta$
       where $c$ is a constant
       as in \citet{TianGuFeng2023LearningFromSimilar,du2020few, niu2024collaborative}, 
    the upper bound can be written as: 
\begin{equation}
\opnorm{(\matr{I} - {\matr{P}}_{\Abarhat}) \oA} \lesssim 
\frac{1}
{ \barzeta \sqrt{1-\varepsilon} } \sqrt{\frac{pr}{nT}}+
\frac{1}{\barzeta}  \sqrt{\frac{r}{n}}
+
h \sqrt r
    \bigg[\frac{\sigmamax(\bm{D}^*_S)}{\sqrt{r}\sigmamin(\bm{D}^*_S)} \wedge 1\bigg]
.
\end{equation}
\end{corollary}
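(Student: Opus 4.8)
The plan is to obtain this bound directly from Proposition~\ref{prop:subspace_error_bd_general} by elementary algebra, since the only new ingredient is the diversity lower bound $\sigmaminin \ge \frac{c}{\sqrt r}\,\barzeta$. First I would invoke Proposition~\ref{prop:subspace_error_bd_general}, which under the stated threshold gives, \wpp,
\[
\opnorm{(\matr{I} - {\matr{P}}_{\Abarhat}) \oA} \lesssim
\frac{1}{\sigmaminin \sqrt{1-\varepsilon}} \sqrt{\frac{p+\cardS}{nT}}
+\frac{h\barzeta}{\sigmaminin}\bigg[\frac{\sigmamax(\bm{D}^*_S)}{\sqrt{r}\,\sigmamin(\bm{D}^*_S)} \wedge 1\bigg].
\]
The assumption $\sigmaminin \ge \frac{c}{\sqrt r}\,\barzeta$ is equivalent to $\frac{1}{\sigmaminin} \le \frac{\sqrt r}{c\,\barzeta}$, which I would substitute into both terms.

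Second, I would unpack the square-root factor in the first term. Because $S$ and $S^c$ partition $[T]$ and $\varepsilon = \card{S^c}/T$, we have $\cardS = (1-\varepsilon)T$, so $\frac{\cardS}{nT} = \frac{1-\varepsilon}{n}$. Applying subadditivity $\sqrt{a+b}\le \sqrt a + \sqrt b$,
\[
\sqrt{\frac{p+\cardS}{nT}} = \sqrt{\frac{p}{nT} + \frac{1-\varepsilon}{n}} \le \sqrt{\frac{p}{nT}} + \sqrt{\frac{1-\varepsilon}{n}}.
\]
Multiplying through by $\frac{\sqrt r}{\barzeta\sqrt{1-\varepsilon}}$ (the constant $1/c$ being absorbed into $\lesssim$) splits the first term into $\frac{1}{\barzeta\sqrt{1-\varepsilon}}\sqrt{\frac{pr}{nT}}$ and $\frac{\sqrt r}{\barzeta\sqrt{1-\varepsilon}}\sqrt{\frac{1-\varepsilon}{n}}$; in the latter the factor $\sqrt{1-\varepsilon}$ cancels, leaving $\frac{1}{\barzeta}\sqrt{\frac{r}{n}}$. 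These are exactly the first two terms of the claimed bound.

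Third, for the heterogeneity term I would substitute the same bound $\frac{1}{\sigmaminin}\le\frac{\sqrt r}{c\,\barzeta}$ to get $\frac{h\barzeta}{\sigmaminin}[\,\cdot\,] \le \frac{h\sqrt r}{c}[\,\cdot\,]$, and absorb $1/c$ into $\lesssim$, yielding the third term $h\sqrt r\,\big[\frac{\sigmamax(\bm{D}^*_S)}{\sqrt{r}\,\sigmamin(\bm{D}^*_S)} \wedge 1\big]$. Summing the three pieces gives the stated inequality. I do not expect a genuine obstacle, as this is purely a corollary; the only point warranting care is the identity $\cardS=(1-\varepsilon)T$ together with the decision to \emph{split} $\sqrt{p+\cardS}$ rather than bound it crudely, since it is precisely this split that lets the $\card{S}$-contribution surface as the dimension-free single-task rate $\frac{1}{\barzeta}\sqrt{r/n}$, making transparent the safe fallback to single-task learning.
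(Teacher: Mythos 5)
Your proposal is correct and follows exactly the route the paper intends (the paper omits an explicit proof of this corollary precisely because it is the immediate algebraic consequence of Proposition~\ref{prop:subspace_error_bd_general} you describe): substitute $\sigmaminin^{-1}\le \sqrt{r}/(c\,\barzeta)$, use $\cardS=(1-\varepsilon)T$ and $\sqrt{a+b}\le\sqrt a+\sqrt b$ to split the first term so that the $\sqrt{1-\varepsilon}$ cancels in the $\cardS$-piece, and absorb constants into $\lesssim$. No gaps.
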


    Under a mild assumption that the 
   contamination proportion $\varepsilon$ is upper bounded, e.g., $\varepsilon \leq 0.9$,
    the term $\frac{1}{\sqrt{1-\varepsilon}}$ 
    is upper bounded by a constant factor $3.2$, which does not affect the rate. 
    Next, we present the estimation error bound for the coefficient estimators returned by RAS.
    For notational convenience, define 
    $k^\star = \min\{   r+ r_{\textup{out}}, r+T\varepsilon, p,T
\}.$

\begin{theorem}
\label{thm:inlier_error}
Let Assumptions \ref{assump:covariate}-\ref{assump:n} hold.
Let threshold 
    $$
    \tau \asymp
    \sqrt{\frac{p+\cardS}{nT}}
    + h \barzeta \sqrt{1-\varepsilon}
    \bigg[\frac{\sigmamax(\bm{D}^*_S)}{\sqrt{r}\sigmamin(\bm{D}^*_S)} \wedge 1\bigg]
    $$
    in the RAS algorithm.
Let  
$\gamma =C' \sqrt{p+\log T}$ with a sufficiently large constant $C'$. 
Assume $\sigmaminin > 1.25\tau$.
Then \wpr, the $\ell_2$ estimation error for any inlier task $t \in S$ is bounded by:
\begin{align}
    \twonorm{\hbetak{t}-\bbetaks{t}}
    \lesssim 
    \bigg\{
    \sqrt{\frac{{k^\star}}{n}} + 
    \sqrt{\frac{\log T}{n}} + 
    h \zetak{t}
    +
    \frac{\zetak{t} }{\sigma_{\min,\text{in}} }
    &\left(
    \frac{1}{\sqrt{1-\varepsilon} }
    \sqrt{\frac{p}{nT}}
    + \sqrt{\frac{1}{n}}
    \right)\\
    &+
    \frac{\zetak{t}}{\sigmaminin}
    \barzeta h 
    \bigg[\frac{\sigmamax(\bm{D}^*_S)}{\sqrt{r}\sigmamin(\bm{D}^*_S)} \wedge 1\bigg]    \bigg\}\\
    &\wedge 
    \sqrt{\frac{p+\log T}{n}},
    \quad \forall t \in S.
\end{align}
And if the outlier tasks satisfy the linear model,
 w.p. at least $1-e^{-C'(p+\log T)}$, 
	\begin{equation}
		\max_{t \in S^c}\twonorm{\hbetak{t}-\bbetaks{t}} \lesssim \sqrt{\frac{p + \log T}{n}}.
	\end{equation}
\end{theorem}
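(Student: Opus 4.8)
The plan is to prove two upper bounds on $\twonorm{\hbetak{t} - \bbetaks{t}}$ that hold simultaneously with high probability and then report their minimum. The first is a universal \emph{single-task fallback} of order $\sqrt{(p+\log T)/n}$ that holds for every task (inlier or outlier) regardless of the quality of the anchor $\Abarhat\hthetak{t}$; this delivers both the $\wedge\sqrt{(p+\log T)/n}$ clause and the entire outlier statement. The second is the \emph{sharp inlier bound}, obtained by transferring the quality of the anchor through the biased-regularization step. Throughout I condition on a global event — the subspace error bound of Proposition \ref{prop:subspace_error_bd_general} together with the rank control $\hat k \le k^\star$ from Lemma \ref{lemma:low rank outlier} and its worst-case refinement — and a per-task event on which $\hSigmak{t} \succeq (c_{\min}/2)\matr{I}$ (restricted eigenvalue) and the score $\bm g^{(t)} := \tfrac1n (\bXk{t})^\top\bepsilonk{t}$ obeys $\twonorm{\bm g^{(t)}} \lesssim \sqrt{(p+\log T)/n}$, intersecting the two via a union bound to match the stated probabilities.

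For the fallback, first write the stationarity condition of Step 3. Since $\fk{t}$ is quadratic with gradient $\hSigmak{t}(\bm\beta - \bbetaks{t}) - \bm g^{(t)}$, stationarity at $\hbetak{t}$ reads $\hSigmak{t}\bm w = \bm g^{(t)} - \tfrac{\gamma}{\sqrt n}\bm s$ with $\bm w := \hbetak{t} - \bbetaks{t}$ and $\bm s$ a subgradient of $\twonorm{\cdot}$ at $\hbetak{t} - \Abarhat\hthetak{t}$, so $\twonorm{\bm s}\le 1$. Taking norms and using $\hSigmak{t}\succeq (c_{\min}/2)\matr{I}$ gives $\twonorm{\bm w} \lesssim \twonorm{\bm g^{(t)}} + \gamma/\sqrt n \lesssim \sqrt{(p+\log T)/n}$, where I used $\gamma\asymp\sqrt{p+\log T}$. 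This argument uses nothing about the task beyond the quadratic loss, so it applies verbatim to outlier tasks that follow the linear model, giving $\max_{t\in S^c}\twonorm{\hbetak{t}-\bbetaks{t}}\lesssim\sqrt{(p+\log T)/n}$, and it is the fallback term in the $\wedge$.

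For the sharp inlier bound I would control the anchor error $\bm\Delta_a := \bbetaks{t} - \Abarhat\hthetak{t}$ and then show the final estimate inherits it. Because $\hthetak{t}$ is the restricted least-squares fit onto $\mathrm{col}(\Abarhat)$, I split $\bm\Delta_a$ into an out-of-subspace bias, comparable to $(\matr I - \matr P_{\Abarhat})\bbetaks{t}$ up to the conditioning of $\hSigmak{t}$, and an in-subspace noise term $\Abarhat(\Abarhat^\top\hSigmak{t}\Abarhat)^{-1}\Abarhat^\top\bm g^{(t)}$. The bias is bounded using the triangle inequality $\twonorm{(\matr I-\matr P_{\Abarhat})\bbetaks{t}} \le \twonorm{(\matr I-\matr P_{\Abarhat})\oA}\,\zetak{t} + \twonorm{(\matr I - \oA\oA^\top)\bbetaks{t}}$, where the first factor is the Proposition \ref{prop:subspace_error_bd_general} bound and the second is $\le h\zetak{t}$ from the similarity assumption \eqref{eq:A similarity}; this reproduces the $h\zetak{t}$, $\tfrac{\zetak{t}}{\sigmaminin}(\tfrac{1}{\sqrt{1-\varepsilon}}\sqrt{p/(nT)}+\sqrt{1/n})$ and $\tfrac{\zetak{t}}{\sigmaminin}\barzeta h[\cdots]$ terms. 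The in-subspace noise is $\lesssim \twonorm{\Abarhat^\top\bm g^{(t)}}$, which I would bound by $\sqrt{\hat k/n} + \sqrt{\log T/n} \le \sqrt{k^\star/n} + \sqrt{\log T/n}$ using $\hat k\le k^\star$, producing the leading $\sqrt{k^\star/n} + \sqrt{\log T/n}$ terms.

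The main obstacle is twofold and lives in this last step. First, $\Abarhat$ is computed from $\widehat{\bB}_{\text{st}}$, which includes task $t$'s own data, so $\Abarhat^\top\bm g^{(t)}$ is not a projection of independent noise onto a fixed low-dimensional subspace; obtaining the sharp $\sqrt{\hat k/n}$ rate (rather than the crude $\sqrt{p/n}$) requires a decoupling or leave-one-out argument showing $\Abarhat$ is stable to perturbing a single column, with the $\sqrt{\log T/n}$ term absorbing the union bound over tasks. Second, I must show the final estimate actually inherits the anchor error, i.e.\ $\twonorm{\bm w}\lesssim\twonorm{\bm\Delta_a}$ up to the noise terms already listed. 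Returning to the stationarity identity and decomposing along $\matr P_{\Abarhat}$ and $\matr I - \matr P_{\Abarhat}$, the non-smooth $\ell_2$ penalty must be shown to dominate the out-of-subspace score — which is exactly why $\gamma\asymp\sqrt{p+\log T}$ is chosen — thereby forcing $\twonorm{(\matr I-\matr P_{\Abarhat})\bm w}$ to the order of the bias while the in-subspace component is pinned near the anchor. Handling the non-smoothness together with the fact that $\hSigmak{t}$ does not commute with $\matr P_{\Abarhat}$ is the delicate computation; I would carry it out via the basic inequality $\tfrac12\bm w^\top\hSigmak{t}\bm w \le \langle\bm g^{(t)},\bm w\rangle + \tfrac{\gamma}{\sqrt n}(\twonorm{\bm\Delta_a} - \twonorm{\bm w + \bm\Delta_a})$ and a case analysis on whether $\hbetak{t}$ collapses onto the anchor. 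Combining the two bounds and taking the minimum yields the stated inequality.
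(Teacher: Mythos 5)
Your decomposition is essentially the paper's: you split the anchor error $\twonorm{\Abarhat\hthetak{t}-\bbetaks{t}}$ into the subspace approximation error (controlled via Proposition \ref{prop:subspace_error_bd_general} plus the $h\zetak{t}$ misalignment from \eqref{eq:A similarity}) and the restricted least-squares estimation error of order $\sqrt{\hat k/n}$, bound $\hat k\le k^\star$, and intersect with a single-task fallback. Where you genuinely diverge is in the last mile. For the fallback you argue directly from the subgradient stationarity condition $\hSigmak{t}(\hbetak{t}-\bbetaks{t})=\bm{g}^{(t)}-\tfrac{\gamma}{\sqrt n}\bm{s}$, $\twonorm{\bm{s}}\le 1$, which is more elementary and self-contained than the paper's route of citing Lemma \ref{lemma:safe net mtl}(\rom{2})--(\rom{3}); both give the same rate. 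For transferring the anchor quality through Step 3 you propose a basic-inequality argument forcing $\twonorm{\hbetak{t}-\bbetaks{t}}\lesssim\twonorm{\bbetaks{t}-\Abarhat\hthetak{t}}$, whereas the paper uses the cleaner exact-collapse dichotomy of Lemma \ref{lemma:safe net mtl}(\rom{1}): when the anchor error $\eta^{(t)}$ is below $C\sqrt{(p+\log T)/n}$ the gradient condition forces $\hbetak{t}=\Abarhat\hthetak{t}$ exactly, so the error \emph{is} the anchor error; otherwise the fallback applies. Your case analysis "on whether $\hbetak{t}$ collapses onto the anchor" is in the right spirit but would be heavier to execute; if you want the clean version, prove the collapse lemma rather than wrestling with the non-smooth basic inequality. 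Finally, you are right to flag that $\Abarhat$ depends on task $t$'s own data, so $\twonorm{\Abarhat^\top\bm{g}^{(t)}}\lesssim\sqrt{\hat k/n}$ is not a consequence of projecting independent noise onto a fixed $\hat k$-dimensional subspace; the paper disposes of this with "standard linear regression results," so your observation identifies a point where the published argument is itself terse, and a leave-one-out or uniform-over-subspaces argument (at the cost of a logarithmic or dimensional factor) would be needed to make it fully rigorous. Net: same skeleton, a more elementary fallback, a heavier (but workable) treatment of Step 3, and a more honest accounting of the dependence issue.
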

\begin{corollary}
\label{corollary:inlier-error}
    Under the same assumptions as in Theorem \ref{thm:inlier_error}, 
    if we take the usual assumption that $\sigma_{\min,\text{in}} \geq \frac{c}{\sqrt r} \barzeta$        where $c$ is a constant
    as in \citet{TianGuFeng2023LearningFromSimilar,du2020few, niu2024collaborative}, 
    the upper bound can be written as:
\begin{align}\label{eq:beta-upperbound}
        \twonorm{\hbetak{t}-\bbetaks{t}}
    \lesssim 
    \bigg\{
    \sqrt{\frac{{k^\star}}{n}} + 
    \frac{\zetak{t}}{ \barzeta 
    \sqrt{1-\varepsilon}} 
    \sqrt{\frac{pr}{nT}}
    +
    \frac{\zetak{t}}{\barzeta}
    \sqrt{\frac{r}{n}}
    + \sqrt{\frac{\log T}{n}}
    &+
    \sqrt r h \zetak{t}
    \bigg[\frac{\sigmamax(\bm{D}^*_S)}{\sqrt{r}\sigmamin(\bm{D}^*_S)} \wedge 1\bigg]
    \bigg\}\\
   & \wedge 
    \sqrt{\frac{p+\log T}{n}},
    \quad \forall t \in S.
\end{align}
\end{corollary}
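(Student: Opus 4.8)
The plan is to derive the corollary directly from Theorem~\ref{thm:inlier_error} by substituting the diversity lower bound $\sigmaminin \geq \frac{c}{\sqrt{r}}\barzeta$ into each term of the general bound; no new probabilistic argument is needed, since the high-probability event and the choice $\gamma = C'\sqrt{p+\log T}$ are inherited verbatim from the theorem. Rewriting the hypothesis as $\frac{1}{\sigmaminin} \leq \frac{\sqrt{r}}{c\,\barzeta}$, I would feed this bound into the two blocks of the theorem estimate that carry the factor $1/\sigmaminin$, while leaving the remaining terms $\sqrt{k^\star/n}$ and $\sqrt{\log T/n}$ untouched, as they do not involve $\sigmaminin$.

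For the block $\frac{\zetak{t}}{\sigmaminin}\big(\frac{1}{\sqrt{1-\varepsilon}}\sqrt{p/(nT)} + \sqrt{1/n}\big)$, substituting the bound on $1/\sigmaminin$ turns $\frac{\zetak{t}}{\sigmaminin}$ into $\frac{\sqrt{r}\,\zetak{t}}{c\,\barzeta}$, so that the extra $\sqrt{r}$ merges under the radicals to give exactly $\frac{\zetak{t}}{\barzeta\sqrt{1-\varepsilon}}\sqrt{pr/(nT)} + \frac{\zetak{t}}{\barzeta}\sqrt{r/n}$, with the constant $1/c$ absorbed into the $\lesssim$. The same substitution applied to the outlier-structure block $\frac{\zetak{t}}{\sigmaminin}\barzeta\, h\,[\frac{\sigmamax(\bm{D}^*_S)}{\sqrt{r}\sigmamin(\bm{D}^*_S)}\wedge 1]$ cancels the $\barzeta$ and leaves $\sqrt{r}\,h\,\zetak{t}[\frac{\sigmamax(\bm{D}^*_S)}{\sqrt{r}\sigmamin(\bm{D}^*_S)}\wedge 1]$, which is precisely the last term of the claimed bound.

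The only point requiring a short verification is that the standalone term $h\,\zetak{t}$ appearing in the theorem is dominated by this last term and can therefore be folded into it. I would observe that $\sqrt{r}\big[\frac{\sigmamax(\bm{D}^*_S)}{\sqrt{r}\sigmamin(\bm{D}^*_S)}\wedge 1\big] \geq 1$ in both regimes of the minimum: when the minimum equals $1$ the quantity is $\sqrt{r}\geq 1$, and when it equals $\frac{\sigmamax(\bm{D}^*_S)}{\sqrt{r}\sigmamin(\bm{D}^*_S)}$ the quantity collapses to the condition number $\sigmamax(\bm{D}^*_S)/\sigmamin(\bm{D}^*_S)\geq 1$. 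Hence $h\,\zetak{t}\leq \sqrt{r}\,h\,\zetak{t}[\frac{\sigmamax(\bm{D}^*_S)}{\sqrt{r}\sigmamin(\bm{D}^*_S)}\wedge 1]$, so dropping it changes the bound by at most a universal constant. Finally, the truncation at $\sqrt{(p+\log T)/n}$ is a pointwise minimum that commutes with every term-by-term replacement above, so it carries over unchanged. Since each manipulation is an inequality in the favorable direction absorbed into $\lesssim$, the main (and essentially only) obstacle is this bookkeeping — confirming the domination of the stray $h\,\zetak{t}$ term — rather than any genuine analytic difficulty.
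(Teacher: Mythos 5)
Your proposal is correct and matches the (implicit) derivation the paper intends: the corollary follows from Theorem~\ref{thm:inlier_error} by substituting $\sigmaminin^{-1}\leq \sqrt{r}/(c\,\barzeta)$ into the two $\sigmaminin$-dependent blocks and absorbing constants into $\lesssim$. Your check that the stray $h\,\zetak{t}$ term is dominated, via $\sqrt{r}\,\big[\tfrac{\sigmamax(\bm{D}^*_S)}{\sqrt{r}\,\sigmamin(\bm{D}^*_S)}\wedge 1\big]=\big[\tfrac{\sigmamax(\bm{D}^*_S)}{\sigmamin(\bm{D}^*_S)}\wedge \sqrt{r}\big]\geq 1$, is exactly the right bookkeeping step and closes the only nontrivial point.
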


    As pointed out previously,
     under a mild assumption that the 
   contamination proportion $\varepsilon$ is upper bounded, 
    the term $\frac{1}{\sqrt{1-\varepsilon}}$
    is upper bounded by a constant factor which does not affect the rate.
    Besides, 
    the term $\frac{1}{\sqrt{1-\varepsilon}} $ combined with $\sqrt{\frac{pr}{nT}}$ can be written as $\sqrt \frac{pr}{n \cardS}$. If the number of inlier tasks $\cardS \gtrsim p$, we have $\sqrt \frac{pr}{n \cardS} \lesssim \sqrt \frac{r}{n}. $
    Note that in  Theorem \ref{thm:inlier_error} and Corollary \ref{corollary:inlier-error}, $k^\star$ is used as an upper bound for $\hat{k}$. In practice, $\hat{k}$ can adapt to the structure of inlier and outlier tasks, and may be smaller than $k^\star$.

    By contrast, the upper bound of \citet[Theorem 7]{TianGuFeng2023LearningFromSimilar} has an additive term 
    $\frac{\zetak{t}}{\barzeta}\max_{t \in S}\zetak{t}\cdot \sqrt{r\bar{\epsilon}} $ where $\bar{\epsilon}$ is an upper bound of the contamination proportion. 
    When the contamination proportion is large, this term dominates the upper bound, and becomes much larger than the single-task error rate.
    And it is assumed in \citet[Theorem 7]{TianGuFeng2023LearningFromSimilar} that
    $$\epsilon = \frac{|S^c|}{T} \leq cr^{-1}\cdot \Big(\frac{\barzeta}{\max_{t \in S}\zetak{t}}\Big)^2$$ where $c > 0$ is a small constant. This strictly limits the application scenarios where their method gains edge over the single-task regression.
    In contrast, our RAS approach can deal with very large contamination proportion, only requiring that 
    $\varepsilon<1$.

Now we consider when the error bound of RAS outperforms that of single-task regresion. 
Assume that $\zetak{t} \asymp 1, \zetabar \asymp 1,$ and $    \frac{\sigmamax(\bm{D}^*_S)}{\sigmamin(\bm{D}^*_S)} \asymp 1
$. The RAS brings benefit when the following conditions hold: 
$r_\textup{out} \ll p \;\textup{or} \; T\varepsilon \ll p, r \ll \cardS \wedge p, h\ll \sqrt{\frac{p+\log T}{n}}$. 
Since RAS tends to be beneficial when the latent representation heterogeneity $h$ is small, in implementations, we can choose $\tau =C \sqrt{\frac{p+\cardS}{nT}}$ and use cross-validation for choosing the constant $C$. 

Note that the first term in the upper bound of \eqref{eq:beta-upperbound} corresponds to the error bound of RAS without the biased regularization step (step 3 of Algorithm \ref{algo:ras}). Therefore, we may even omit the biased regularization step when the above conditions hold.

\section{Applications in Transfer Learning}\label{sec:TL}

Transfer learning (or meta-learning, learning to learn), aims to transfer the knowledge from some source tasks to the target task for efficient learning 
\citep{weiss2016survey,vanschoren2018meta,zhuang2020comprehensive}.
The learned shared representation in MTL is often used for transfer learning in the literature \citep{du2020few,TianGuFeng2023LearningFromSimilar, niu2024collaborative}.
In this section, we show how to leverage the learned representation $\Abarhat$ from RAS for efficient knowledge transfer to the target
task, and give its theoretical guarantees.

Assume that the target task also follows the linear model:
\begin{equation}\label{eq: linear model tl}
	\yk{0}_i = (\bxk{0}_i)^\top\bbetaks{0} + \epsilonk{0}_i 
\end{equation}
where $\bbetaks{0}=\bAks{0}\bthetaks{0}$, and  $\bAks{0} \in \mathcal{O}^{p \times r}$ with orthonormal columns is the latent representation corresponding to the target task.
The noise term $\epsilonk{0}_i$ is independent of the covariates $ \bxk{0}_i$. Denote $\zetak{0}:= \twonorm{\bbetaks{0}}.$
To describe the relationship between the target and source tasks, it is assumed that 
\begin{equation}
   \twonorm{
   \oA \oA^\top 
   - \bAks{0}(\bAks{0})^\top} \leq h_0,
\end{equation}
where $h_0$ measures the distance between the latent representation of the target task and the central representation of the  $T$ source tasks, and may differ from the $h$ defined in \eqref{eq:A similarity}.

Similar to the MTL setting, we make some standard assumptions below.
\begin{assumption}
\label{assump:TLcovariate}
For the target task, the feature vectors $\bm{x}_i^{(0)}$ are i.i.d. sub-Gaussian random vectors.
The covariance matrix is $\matr{\Sigma}^{(0)} = \E[\bm{x}_i^{(0)}(\bm{x}_i^{(0)})^\top]$, and we assume $0 < c_{\min} \le \lambda_{\min}(\matr{\Sigma}^{(0)}) \le \lambda_{\max}(\matr{\Sigma}^{(0)}) \le c_{\max} < \infty$, 
\end{assumption}

\begin{assumption}\label{assump:TLn}
	$n_0 \geq Cp$ with a  universal constant $C > 0$.
\end{assumption}

In the Algorithm \ref{algo: tl} below, we utilize the shared representation learned in our RAS algorithm, and 
apply a standard step for transferring the knowledge of the learned representation in the literature \citep{TianGuFeng2023LearningFromSimilar,du2020few,tripuraneni2021provable}.

\begin{algorithm}[!h]
\caption{RAS-Transfer}
\label{algo: tl}
\KwIn{
Target data $\{\bxk{0}_i, \yk{0}_i\}_{i=1}^{n_0}$, estimator $\hbarA$ from RAS,
penalty parameter $\gamma$}
\KwOut{Estimator $\hbetak{0}$}
\underline{Step 1:}  $\hthetak{0} = \argmin_{\btheta \in \mathbb{R}^{\hat{k}}}\big\{ \fk{0}(\hbarA\btheta)\big\}$ \\
\underline{Step 2:} $\hbetak{0} = \argmin_{\bbeta \in \mathbb{R}^p} \big\{\fk{0}(\bbeta) + \frac{\gamma}{\sqrt{n_0}}\twonorm{\bbeta - \hbarA\hthetak{0}}\big\}$
\end{algorithm}

Now, we show the theoretical guarantees for the coefficient estimation error of the target task.
\begin{proposition}
\label{thm:TL-error}
Let Assumptions \ref{assump:TLcovariate}-\ref{assump:TLn} hold.
Let threshold 
    $$
    \tau \asymp
    \sqrt{\frac{p+\cardS}{nT}}
    + h \barzeta \sqrt{1-\varepsilon}
    \bigg[\frac{\sigmamax(\bm{D}^*_S)}{\sqrt{r}\sigmamin(\bm{D}^*_S)} \wedge 1\bigg]
    $$
    in the RAS algorithm.
Let  
$\gamma =C' \sqrt{p+\log T}$ with a sufficiently large constant $C'$.
Then \wpr, the $\ell_2$ estimation error for the target task is bounded by:
\begin{align}
    \twonorm{\hbetak{0}-\bbetaks{0}}
    \lesssim 
    \bigg\{
    \sqrt{\frac{k^\star}{n_0}} + 
    h_0 \zetak{0}
    +
    \frac{\zetak{0} }{\sigma_{\min,\text{in}} }
    &\left(
    \frac{1}{\sqrt{1-\varepsilon} }
    \sqrt{\frac{p}{nT}}
    + \sqrt{\frac{1}{n}}
    \right)\\
    &+
    \frac{\zetak{0}}{\sigmaminin}
    \barzeta h 
    \bigg[\frac{\sigmamax(\bm{D}^*_S)}{\sqrt{r}\sigmamin(\bm{D}^*_S)} \wedge 1\bigg]    \bigg\}
    \wedge 
    \sqrt{\frac{p}{n_0}}.
\end{align}
\end{proposition}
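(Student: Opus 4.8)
The plan is to recognize that RAS-Transfer (Algorithm~\ref{algo: tl}) is nothing but Steps~2--3 of RAS executed on the single target task, using the representation $\hbarA$ already produced by RAS from the $T$ source tasks. The argument therefore mirrors the inlier analysis of Theorem~\ref{thm:inlier_error}, with two bookkeeping changes: the target sample size $n_0$ replaces $n$, and there is no union bound over tasks, so the $\sqrt{\log T/n}$ term disappears and the fallback becomes the single-task rate $\sqrt{p/n_0}$. First I would condition on the intersection of the high-probability event from Proposition~\ref{prop:subspace_error_bd_general}, on which $\hbarA$ satisfies the subspace bound~\eqref{eq:subspace_error1}, with the standard target-side concentration events (eigenvalues of $\hSigmak{0}$ and the noise gradient). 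The Proposition~\ref{prop:subspace_error_bd_general} event depends only on the source data and is exactly what injects the terms involving $\sigma_{\min,\text{in}}$, $h$, $\barzeta$, and $\bm{D}^*_S$ into the final bound.

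For Step~1 of RAS-Transfer I would bound the anchor error $\twonorm{\hbarA\hthetak{0}-\bbetaks{0}}$ by splitting it into a variance part and a bias part. The variance part is the in-subspace estimation error: since $\hthetak{0}$ is an ordinary least-squares fit of $\yk{0}$ on the $\hat{k}$-dimensional features $\bXk{0}\hbarA$, Assumptions~\ref{assump:TLcovariate}--\ref{assump:TLn} give $\lambda_{\min}(\hbarA^\top\hSigmak{0}\hbarA)\gtrsim c_{\min}$ on an event of probability $1-e^{-Cp}$, so this part is $\lesssim\sqrt{\hat{k}/n_0}\le\sqrt{k^\star/n_0}$ using $\hat{k}\le k^\star$. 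The bias part is the distance from $\bbetaks{0}$ to the $\bSigmak{0}$-weighted projection of $\bbetaks{0}$ onto $\mathrm{col}(\hbarA)$; because $\bSigmak{0}$ is well-conditioned, this is comparable up to the factor $\sqrt{c_{\max}/c_{\min}}$ to the Euclidean residual $\twonorm{(\matr{I}-\matr{P}_{\hbarA})\bbetaks{0}}$.

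The key estimate is the bias bound, which I would route through the central representation $\oA$. Writing $\bbetaks{0}=\bAks{0}(\bAks{0})^\top\bbetaks{0}$ and inserting $\oA\oA^\top$,
\begin{equation}
\twonorm{(\matr{I}-\matr{P}_{\hbarA})\bbetaks{0}}
\le \opnorm{(\matr{I}-\matr{P}_{\hbarA})\oA}\,\twonorm{\oA^\top\bbetaks{0}}
+\opnorm{\bAks{0}(\bAks{0})^\top-\oA\oA^\top}\,\twonorm{\bbetaks{0}}
\le \zetak{0}\big(\opnorm{(\matr{I}-\matr{P}_{\hbarA})\oA}+h_0\big),
\end{equation}
using $\twonorm{\oA^\top\bbetaks{0}}\le\zetak{0}$ and the target-similarity bound $\twonorm{\oA\oA^\top-\bAks{0}(\bAks{0})^\top}\le h_0$. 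Substituting~\eqref{eq:subspace_error1} for $\opnorm{(\matr{I}-\matr{P}_{\hbarA})\oA}$ and recalling $\cardS=(1-\varepsilon)T$ (so that $\sqrt{(p+\cardS)/(nT)}\asymp\sqrt{p/(nT)}+\sqrt{(1-\varepsilon)/n}$, and the $\tfrac{1}{\sqrt{1-\varepsilon}}$ prefactor produces the displayed $\tfrac{1}{\sqrt{1-\varepsilon}}\sqrt{p/(nT)}+\sqrt{1/n}$) reproduces precisely the $h_0\zetak{0}$ term, the two $\zetak{0}/\sigma_{\min,\text{in}}$ terms, and the final term with the bracketed outlier factor in the stated bound.

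Finally, for Step~2 I would invoke the same biased-regularization lemma used in the proof of Theorem~\ref{thm:inlier_error}. With $\gamma\asymp\sqrt{p+\log T}$ chosen to dominate the target noise gradient $\twonorm{\nabla\fk{0}(\bbetaks{0})}\lesssim\sqrt{p/n_0}$, the penalized estimator anchored at $\hbarA\hthetak{0}$ satisfies $\twonorm{\hbetak{0}-\bbetaks{0}}\lesssim\twonorm{\hbarA\hthetak{0}-\bbetaks{0}}$ when the anchor is accurate, while the first-order optimality condition $\twonorm{\nabla\fk{0}(\hbetak{0})}\le\gamma/\sqrt{n_0}$ together with $\lambda_{\min}(\hSigmak{0})\gtrsim c_{\min}$ always caps the error at the single-task rate; taking the minimum yields the $\wedge\sqrt{p/n_0}$ in the statement. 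The main obstacle I anticipate is the bias step: making the comparison between the $\bSigmak{0}$-weighted projection arising from the least-squares fit and the Euclidean projector $\matr{P}_{\hbarA}$ precise, and threading the similarity bound through $\oA$ so that the target deviation $h_0$ (rather than the source similarity $h$) appears in the leading bias term. The concentration and biased-regularization pieces are routine adaptations of the inlier proof; the only genuinely new ingredient is this target-to-central representation triangle inequality.
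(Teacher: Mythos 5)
Your proposal matches the paper's proof in all essentials: the same bias--variance split of the anchor error $\twonorm{\hbarA\hthetak{0}-\bbetaks{0}}$, the same routing of the bias through the central representation $\oA$ (yielding $h_0\zetak{0}$ plus $\zetak{0}\opnorm{(\matr{I}-\matr{P}_{\hbarA})\oA}$ via Proposition~\ref{prop:subspace_error_bd_general}), the same $\sqrt{\hat k/n_0}$ noise bound using the well-conditioned sample covariance, and the same invocation of the biased-regularization safe-net lemma to obtain the $\wedge\sqrt{p/n_0}$ fallback. The only cosmetic difference is that the paper handles the $\hSigmak{0}$-weighted versus Euclidean projection issue by explicitly bounding $\opnorm{\Abarhat(\Abarhat^\top\hSigmak{0}\Abarhat)^{-1}\Abarhat^\top\hSigmak{0}}$ by a constant (Lemma~\ref{lemma:cov-hat}), which is exactly the comparison you flag as the remaining detail.
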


\begin{corollary}
\label{corollary:TL-error}
    Under the same assumptions as in Proposition \ref{thm:TL-error},
    if we take the usual assumption that $\sigma_{\min,\text{in}} \geq \frac{c}{\sqrt r} \barzeta$        where $c$ is a constant
    as in \citet{TianGuFeng2023LearningFromSimilar,du2020few, niu2024collaborative}, 
    the upper bound can be written as:
\begin{align}
        \twonorm{\hbetak{0}-\bbetaks{0}}
    \lesssim 
    \bigg\{
    \sqrt{\frac{k^\star }{n_0}} + 
        h_0 \zetak{0}+
    \frac{\zetak{0}}{ \barzeta 
    \sqrt{1-\varepsilon}} 
    \sqrt{\frac{pr}{nT}}
    &+
    \frac{\zetak{0}}{\barzeta}
    \sqrt{\frac{r}{n}}\\
    &+
    \sqrt r h \zetak{0}
    \bigg[\frac{\sigmamax(\bm{D}^*_S)}{\sqrt{r}\sigmamin(\bm{D}^*_S)} \wedge 1\bigg]
    \bigg\}\\
   & \wedge 
    \sqrt{\frac{p}{n_0}}.
\end{align}
\end{corollary}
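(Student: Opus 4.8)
The plan is to obtain this corollary directly from Proposition \ref{thm:TL-error} by substituting the diversity-scaling condition $\sigmaminin \geq \frac{c}{\sqrt r}\barzeta$ into the bound already established there; no fresh probabilistic analysis is required, since the high-probability event, the constant $C'$ in the penalty $\gamma$, and the $\wedge\sqrt{p/n_0}$ safeguard term all carry over verbatim from the proposition. The one reusable algebraic fact is that the assumption is equivalent to $\frac{1}{\sigmaminin}\leq \frac{\sqrt r}{c\,\barzeta}$, and I would insert this inequality into each of the three summands of the proposition's bound that carry a $\sigmaminin$ denominator, absorbing the fixed constant $c$ into the universal constant hidden in $\lesssim$.

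Concretely, I would process the five additive terms of the proposition bound one at a time. The first two, $\sqrt{k^\star/n_0}$ and $h_0\zetak{0}$, contain no $\sigmaminin$ and pass through unchanged. For the term $\frac{\zetak{0}}{\sigmaminin}\cdot\frac{1}{\sqrt{1-\varepsilon}}\sqrt{p/(nT)}$, the substitution $\frac{1}{\sigmaminin}\leq\frac{\sqrt r}{c\,\barzeta}$ yields $\frac{\zetak{0}}{\barzeta\sqrt{1-\varepsilon}}\sqrt{pr/(nT)}$ after absorbing $c$; similarly $\frac{\zetak{0}}{\sigmaminin}\sqrt{1/n}$ becomes $\frac{\zetak{0}}{\barzeta}\sqrt{r/n}$. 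Finally, in the heterogeneity term $\frac{\zetak{0}}{\sigmaminin}\,\barzeta\,h\,\big[\sigmamax(\bm{D}^*_S)/(\sqrt r\,\sigmamin(\bm{D}^*_S))\wedge 1\big]$, the prefactor $\frac{\barzeta}{\sigmaminin}$ is bounded by $\sqrt r/c$, so the $\barzeta$ cancels and the term collapses to $\sqrt r\,h\,\zetak{0}\,\big[\sigmamax(\bm{D}^*_S)/(\sqrt r\,\sigmamin(\bm{D}^*_S))\wedge 1\big]$. Summing the five contributions and retaining the $\wedge\sqrt{p/n_0}$ cap reproduces the stated expression exactly.

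There is essentially no genuine obstacle: the corollary is a deterministic rescaling of the proposition under a single eigenvalue lower bound. The only care needed is bookkeeping — verifying that all three occurrences of $1/\sigmaminin$ have been replaced and that the constant $c$ is legitimately swallowed by the $\lesssim$, which is immediate because $c$ is a fixed absolute constant independent of $n,n_0,p,r,T$.
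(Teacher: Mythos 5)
Your proposal is correct and matches the paper's (implicit) argument: the corollary follows from Proposition \ref{thm:TL-error} purely by the deterministic substitution $\frac{1}{\sigmaminin}\leq \frac{\sqrt r}{c\,\barzeta}$ in the three $\sigmaminin$-dependent terms, with the constant $c$ absorbed into $\lesssim$ and the high-probability event and $\wedge\sqrt{p/n_0}$ cap inherited unchanged. The paper gives no separate proof for this corollary precisely because it is this routine rescaling, and your term-by-term bookkeeping (including the cancellation of $\barzeta$ in the heterogeneity term) is exactly right.
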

Similar to our discussion in the previous section, 
the upper bound is composed of two terms: the single-task rate, and the rate by the learned representation from our RAS algorithm. 
Assume that $\zetak{0} \asymp 1, \zetabar \asymp 1,$ and $    \frac{\sigmamax(\bm{D}^*_S)}{\sigmamin(\bm{D}^*_S)} \asymp 1
$.
The RAS-Transfer outperforms single-task regression, if
$r_\textup{out} \ll p\; \textup{or} \; T\varepsilon \ll p , h_0 \ll \frac{p}{n_0}, h \ll \frac{p}{n_0}, r \ll \frac{n\cardS}{n_0} \wedge p.$

\section{Numerical Experiments}
\label{sec:experiment}
This section presents numerical results to corroborate our theoretical findings.  The following methods are compared in the experiments.
\begin{itemize}
    \item Single-task regression (Single-task): we use the data from each task to fit the model.
    \item Pooled regression (Pooled) \citep{crammer2008learning}: we pool the data from all tasks and fit a regression model. 
    \item Robust Adaptive Spectral Method (RAS) proposed in the paper.
    \item Robust Adaptive Spectral Method without the biased regularization step (RAS w/o reg): our RAS algorithm, without the Step 3 (biased regularization).
    \item Robust Adaptive Spectral Method with winsorization (RAS w/ winsor): we use a winsorization step before conducting SVD.
    Specifically, let 
    $R = \texttt{quantile}(\{\twonorm{\widehat{\bbeta}^{(t)}}\}_{t=1}^T, 1-{\varepsilon})$. For tasks with $\twonorm{\hbetak{t}} > R,$ the $\hbetak{t}$ is rescaled to have $l_2$ norm $R$, and then the SVD is conducted.
    \item Spectral Method (SM): the Algorithm 2 in \citet{TianGuFeng2023LearningFromSimilar}.
    \item Spectral Method without the biased regularization step (SM w/o reg): we do not use the biased regularization step  (Step 4) in Algorithm 2 of \citet{TianGuFeng2023LearningFromSimilar},
    which demonstrates the performance of the learned shared representation of SM.
\end{itemize}
Note that SM and SM w/o reg requires the knowledge of the true $r$, which is assumed to be known in the experiments. 
For the winsorization step used in SM, SM w/o reg, and RAS w/ winsor, we assume the true contamination proportion $\varepsilon$ is known.
The threshold in RAS, RAS w/o reg, RAS w/ winsor is set as $\tau = 2.5 \sqrt{\frac{p+T}{nT}}$. 
For the biased regularization step used in RAS, RAS w/ winsor, SM,  we follow the regularization parameter $\gamma = 0.5 \sqrt{p+\log T}$ as  in \citet{TianGuFeng2023LearningFromSimilar}, and use the automatic differentiation implementation in PyTorch \citep{paszke2019pytorch}, with Adam optimizer \citep{kingma2015adam}, and the learning rate is set to 0.01.

In the following sections, we present the performance of different methods, with low-rank or general-rank outliers. We also explore the impact of winsorization, especially when the outlier magnitude is small, and the impact of increasing number of tasks in MTL.

\subsection{Low-rank Outliers}
\label{sec:low-rank-outliers}
In this section, we explore the performance of different methods, with approximately low-rank outliers.
The data generating process is as follows.
Among the $T$ tasks, we randomly sample $T \varepsilon$ (with rounding) as outliers. 
For each task, the covariates $\bxk{t}_i$ are i.i.d. from $N(\bm{0}_p, \bm{I}_p)$, with the noise term $\epsilonk{t}_i$ i.i.d. from $N(0, 1)$.
We set a matrix $Q$ as the orthonormal matrix in the QR decomposition of a $p \times p$ random matrix with i.i.d. standard normal entries.
Let $\oA$ be the first $r$ columns of the matrix $Q$, and $\Aout$ be the next $r_{\textup{out}}$ columns.
The true coefficient $\bbetaks{t}$ of inlier tasks are generated as $\bbetaks{t} = \oA \bthetaks{t}$ where the entries of $\bthetaks{t}$ are i.i.d. from $\textup{Unif}[-a,a]$, with $a=2$.
For outlier tasks, we generate 
$\bbetaks{t} = \Aout \bthetaks{t}$ where the entries of $\bthetaks{t}$ are i.i.d. from $\textup{Unif}[-b,b]$, with $b=20$, to represent adversarially contaminated tasks.
The response variable is then generated according to the linear model \eqref{eq:linear model}.
We vary the contamination proportion $\varepsilon$ among $[0,0.05,0.1, \ldots,0.75,0.8]$, with each value of $\varepsilon$ replicated for 100 times and the average being taken.
We consider the following parameter settings.
\begin{itemize}
    \item $n=150, p=80, T=100, r=10, r_{\textup{out}}=10.$
    \item $n=100, p=50, T=100, r=5, r_{\textup{out}}=5.$
\end{itemize}
The results are shown in Figures \ref{fig:lowrank-n150} and \ref{fig:lowrank-n100}.
When the contamination proportion is slightly large (e.g., when $\varepsilon \geq 0.1$), the inlier estimation error of SM w/o reg goes up quickly, being much higher than the single-task estimation error, which shows that the shared representation learned by SM is off. While the SM w/o reg has much higher estimation error than the Single-task, the SM has similar  performance with Single-task, because the biased regularization  is used, which prevents negative transfer from the poorly learned shared representation.

In the data generating process of this section, the $r_{\textup{out}}$ 
is set to be small, which falls into the approximately low-rank contamination setting. 
According to Lemma \ref{lemma:low rank outlier} and Theorem \ref{thm:inlier_error}, the estimated dimension $\hat{k}$ is small, and the RAS should have small estimation error. 
The numerical results here corroborate the theoretical statements.
Note that RAS consistently outperforms single-task regression, in the large range of contamination proportion from $0$ to $0.8$, showcasing the effectiveness of our proposed method. 

Also, with the increase of the contamination proportion, there is mild increase of the estimation error of RAS  due to the $\frac{1}{\sqrt{1-\varepsilon}}$ factor in the upper bound of estimation error, which demonstrates our previous argument that the multiplicative factor term has minor influence on the performance.

Besides, it is worth noting that RAS w/o reg shows very similar performance as RAS, which demonstrates that the learned representation from our RAS algorithm well captures the true inlier subspace $\oA,$ and thus outperforms the single-task regression, even without the biased regularization step.

Another observation is that RAS w/ winsor exhibits similar performance as RAS and RAS w/o reg. This is because the magnitude of the contaminated tasks is set to be much larger than the inlier tasks, and thus the winsorization step does not affect the observed inlier signal.
The pooled regression consistently have the highest estimation error, showing that simple data pooling does not work well in the existence of contamination.

The right panel of Figures \ref{fig:lowrank-n150} and \ref{fig:lowrank-n100} shows the estimated dimension $\hat{k}$ of RAS and RAS w/ winsor. Since the outlier magnitude is quite large, with slight increase of the contamination proportion, RAS and its with-winsorization variant detect the combined signals and use them in the learning of the shared representation, which leads to the superior performance demonstrated in the estimation error in the left panel. 
Besides, when the contamination gets very large, e.g., $\tau=0.8,$ the estimated dimension $\hat{k}$ slightly drops. This is because when the contamination proportion gets very large, the number of inlier tasks gets very few, e.g., there are only $20$ inlier tasks in our setting, when $\varepsilon=0.8.$ And this is a little insufficient to learn the true $r$-dim inlier subspace. Some signals that are not well represented in the observed inlier tasks shrink below the threshold, resulting in the minor decrease in the estimated dimension.

\begin{figure}[htbp]
  \centering
  \includegraphics[width=\linewidth]{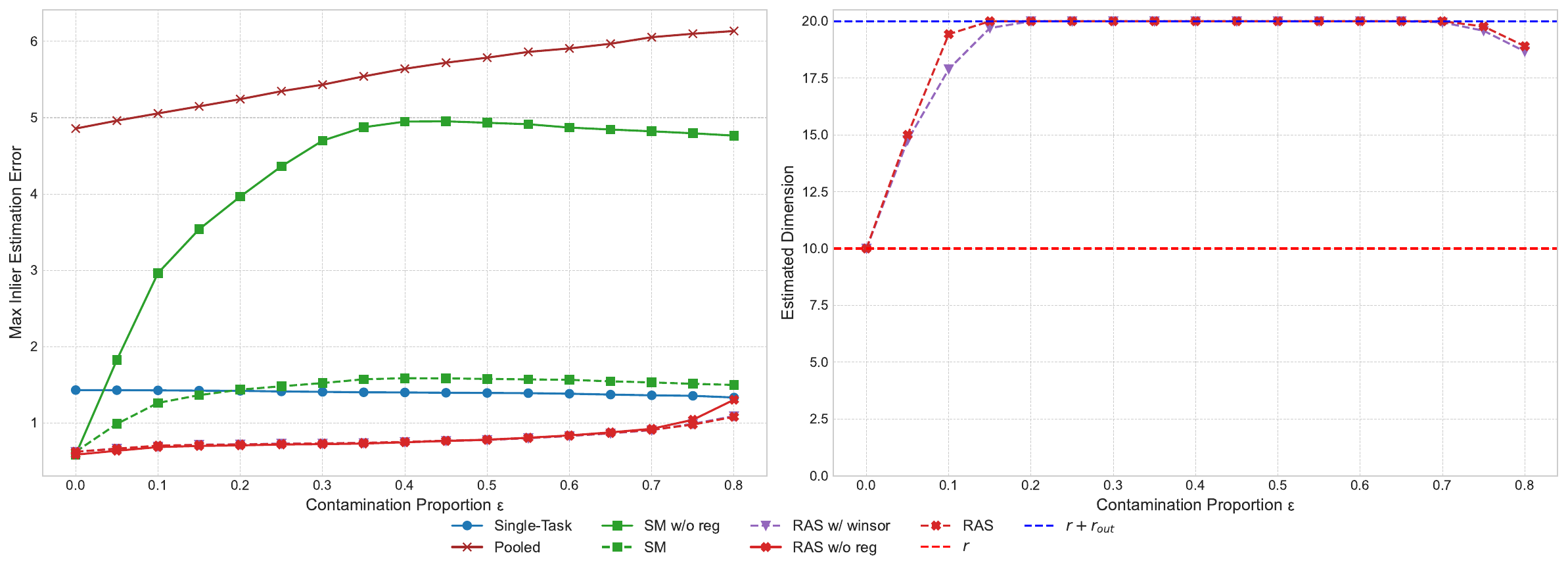}
  \caption{
  $n=150, p=80, T=100, r=10, r_{\textup{out}}=10.$
  In the left panel, the vertical axis is the maximum inlier estimation error $\max_{t \in S}\twonorm{\hbetak{t}-\bbetaks{t}}$.
   The horizontal axis is the contamination proportion.
   In the right panel, the vertical axis is the estimated dimension $\hat{k}$, and the horizontal axis is the contamination proportion.
   Each point is computed as the average of 100 replications.}
  \label{fig:lowrank-n150} %
\end{figure}
\begin{figure}[htbp]
  \centering
  \includegraphics[width=\linewidth]{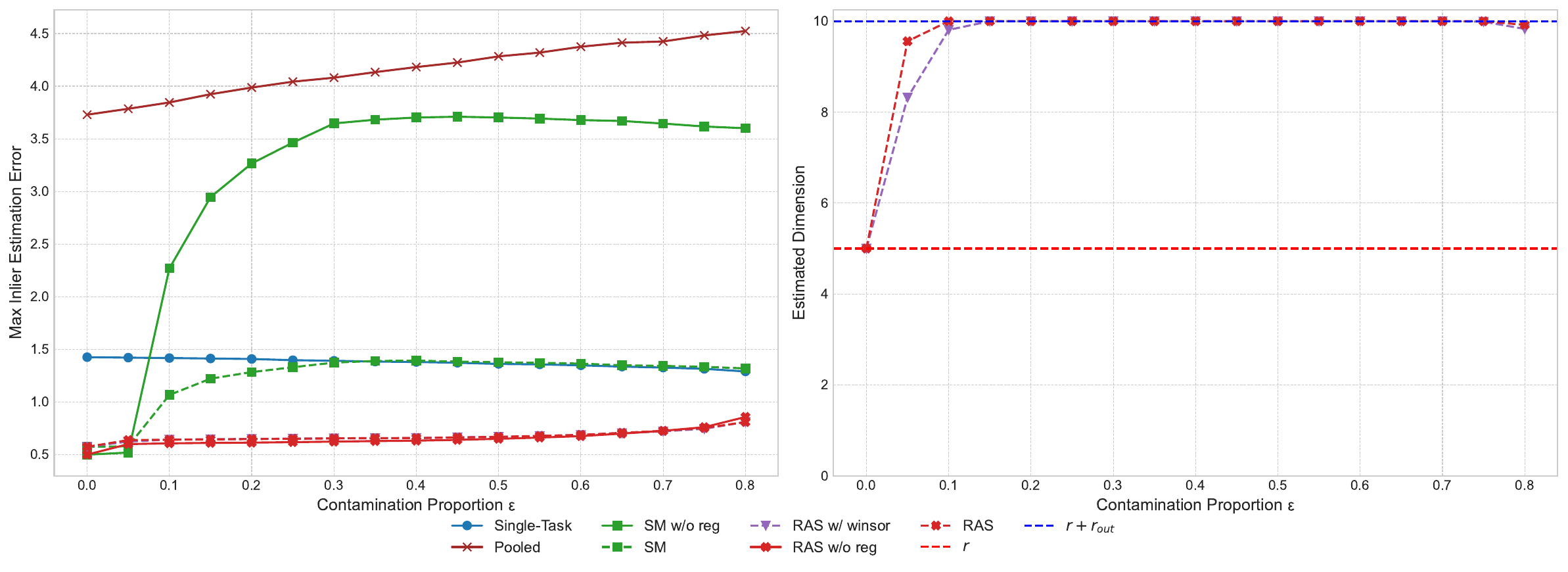}
  \caption{
  $n=100, p=50, T=100, r=5, r_{\textup{out}}=5.$
  In the left panel, the vertical axis is the maximum inlier estimation error $\max_{t \in S}\twonorm{\hbetak{t}-\bbetaks{t}}$.
   The horizontal axis is the contamination proportion.
   In the right panel, the vertical axis is the estimated dimension $\hat{k}$, and the horizontal axis is the contamination proportion.
   Each point is computed as the average of 100 replications.}
  \label{fig:lowrank-n100} %
\end{figure}

\subsection{Impact of Winsorization}

As we observed in the previous section, it seems that the with-winsorization version of RAS performs similarly with RAS. However, we argue that this is not always the case, especially when the outlier magnitude is small, and the winsorization may affect the observed inlier signals. 
In this section, we explore the impact of winsorization on RAS.

We adopt a similar data generating process, with outliers of smaller magnitude, to exhibit the impact of winsorization clearly.
Specifically, we set $b=0.2$ in the data generating process of Section \ref{sec:low-rank-outliers}, and keep other settings the same.
We consider the following parameter settings.
\begin{itemize}
    \item $n=150, p=80, T=100, r=10, r_{\textup{out}}=10.$
    \item $n=100, p=50, T=100, r=5, r_{\textup{out}}=5.$
\end{itemize}

\begin{figure}[htbp]
  \centering
  \includegraphics[width=\linewidth]{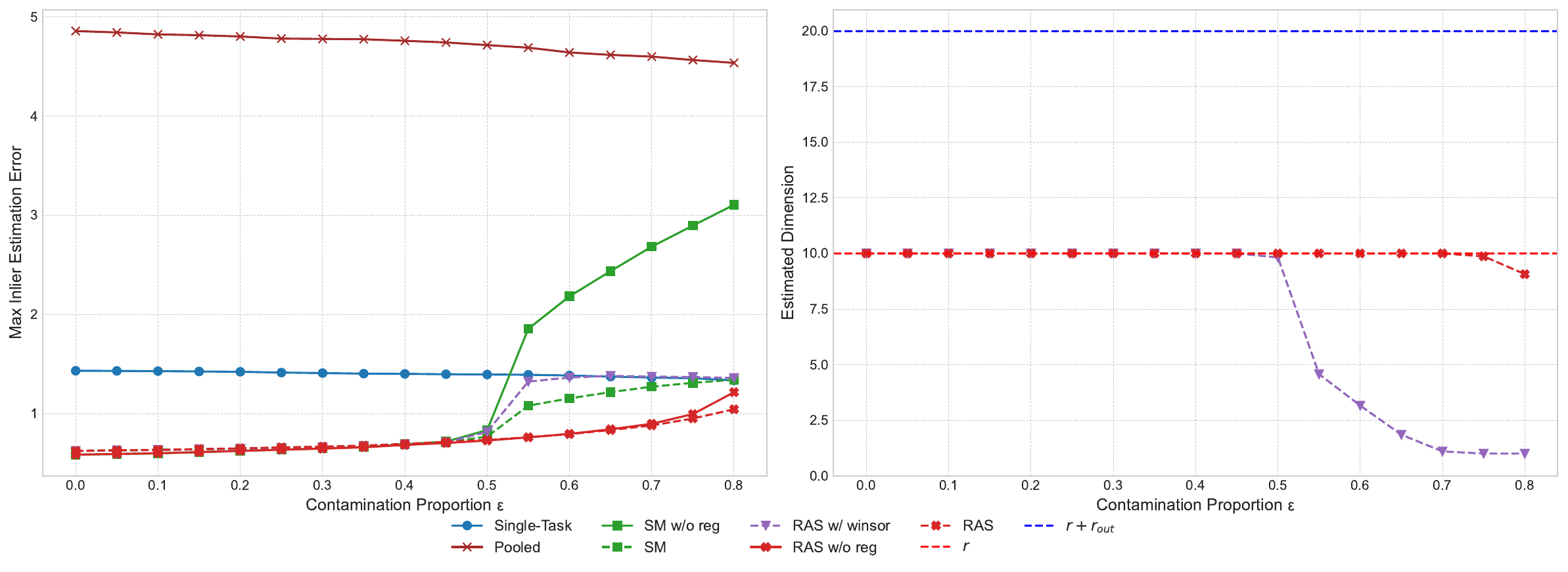}
  \caption{
  $n=150, p=80, T=100, r=10, r_{\textup{out}}=10.$
  In the left panel, the vertical axis is the maximum inlier estimation error $\max_{t \in S}\twonorm{\hbetak{t}-\bbetaks{t}}$.
   The horizontal axis is the contamination proportion.
   In the right panel, the vertical axis is the estimated dimension $\hat{k}$, and the horizontal axis is the contamination proportion.
   Each point is computed as the average of 100 replications.}
  \label{fig:winsor-n150} %
\end{figure}

\begin{figure}[htbp]
  \centering
  \includegraphics[width=\linewidth]{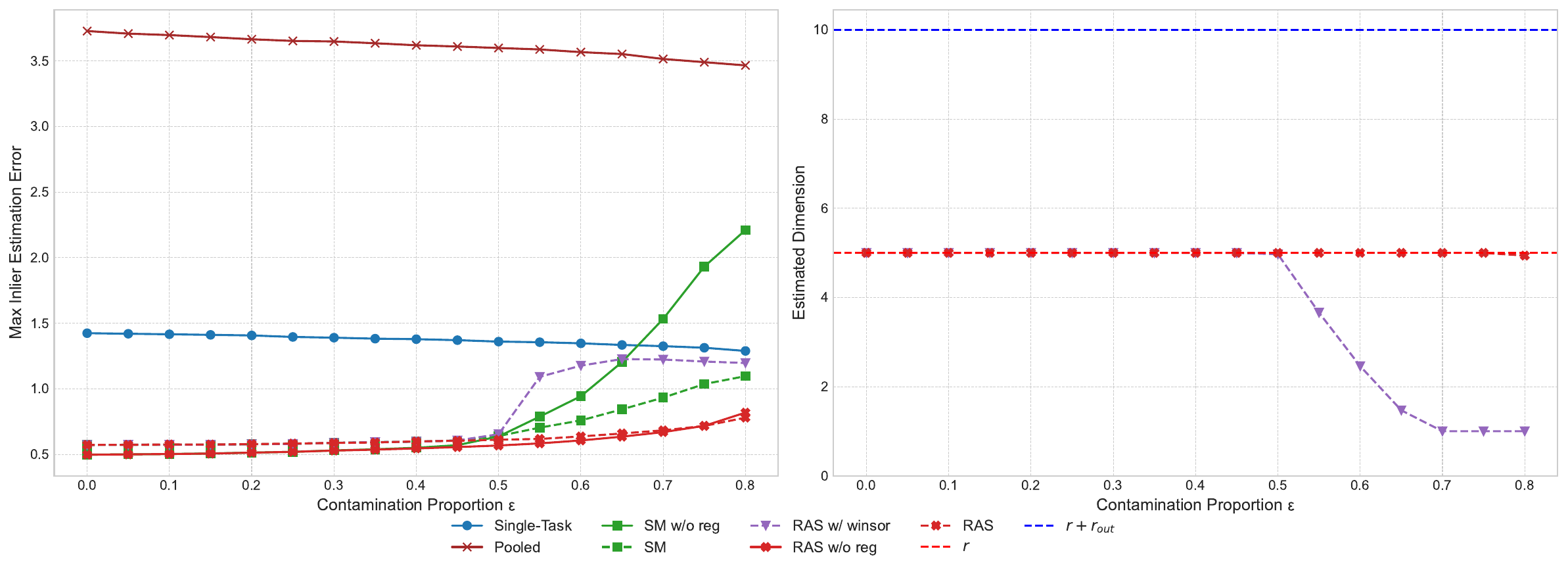}
  \caption{
  $n=100, p=50, T=100, r=5, r_{\textup{out}}=5.$
  In the left panel, the vertical axis is the maximum inlier estimation error $\max_{t \in S}\twonorm{\hbetak{t}-\bbetaks{t}}$.
   The horizontal axis is the contamination proportion.
   In the right panel, the vertical axis is the estimated dimension $\hat{k}$, and the horizontal axis is the contamination proportion.
   Each point is computed as the average of 100 replications.}
  \label{fig:winsor-n100} %
\end{figure}

The results are shown in Figures \ref{fig:winsor-n150} and \ref{fig:winsor-n100}. Note that the magnitude of outliers is set to be very small, so when the contamination proportion is moderate, the outliers do not affect the performance of these methods.
When $\varepsilon >0.5,$ all the inlier task coefficients are winsorized, causing inlier signal losses.  Therefore,  methods that use winsorization (RAS w/ winsor, SM, SM w/o reg) quickly have higher inlier estimation errors. Because RAS w/ winsor and SM use the biased regularization step to prevent negative transfer, they show similar or a little lower estimation error than single-task regression, but perform worse than RAS and RAS w/o reg.
The right panel of the Figures \ref{fig:winsor-n150} and \ref{fig:winsor-n100} shows the estimated dimension $\hat{k}$. When $\varepsilon > 0.5,$ the $\hat{k}$ declines quicly, since some of the winsorized inlier signals drop below the threshold. 
Note that we use oracle SM and SM w/o reg, with the true $r$ in the experiments. But similar effects of winsorization on the estimated dimension, as being observed for RAS, also apply to SM and SM w/o reg without oracle knowledge of the true $r$.

On the other hand, the RAS and RAS w/o reg are quite robust, regardless of the outlier magnitude, since they do not use winsorization and avoid the risk of inlier signal loss. Also, note that RAS and RAS w/o reg is quite adaptive to the outlier signal magnitude: when the outlier magnitude is small, the estimated dimension $\hat{k}$ recovers the true $r$, as opposed to the combined dimension $r+r_{\textup{out}}$.

Also, note that the exact contamination proportion $\varepsilon$ is assumed to be known in the experiments, and the winsorization uses the true value.
In practice, however, people only get a conservative upper bound for the contamination proportion $\varepsilon$ oftentimes,  which even amplifies the impact of winsorization.

\subsection{General-rank Outliers}
In this section, we explore the performance of different methods when the outliers does not enjoy the approximate low-rank structure.
We use the same data generating distribution as in Section \ref{sec:low-rank-outliers}. We consider the following parameter setting.
\begin{itemize}
    \item $n=150, p=80, T=100, r=5, r_{\textup{out}}=65.$
\end{itemize}

\begin{figure}[htbp]
  \centering
  \includegraphics[width=\linewidth]{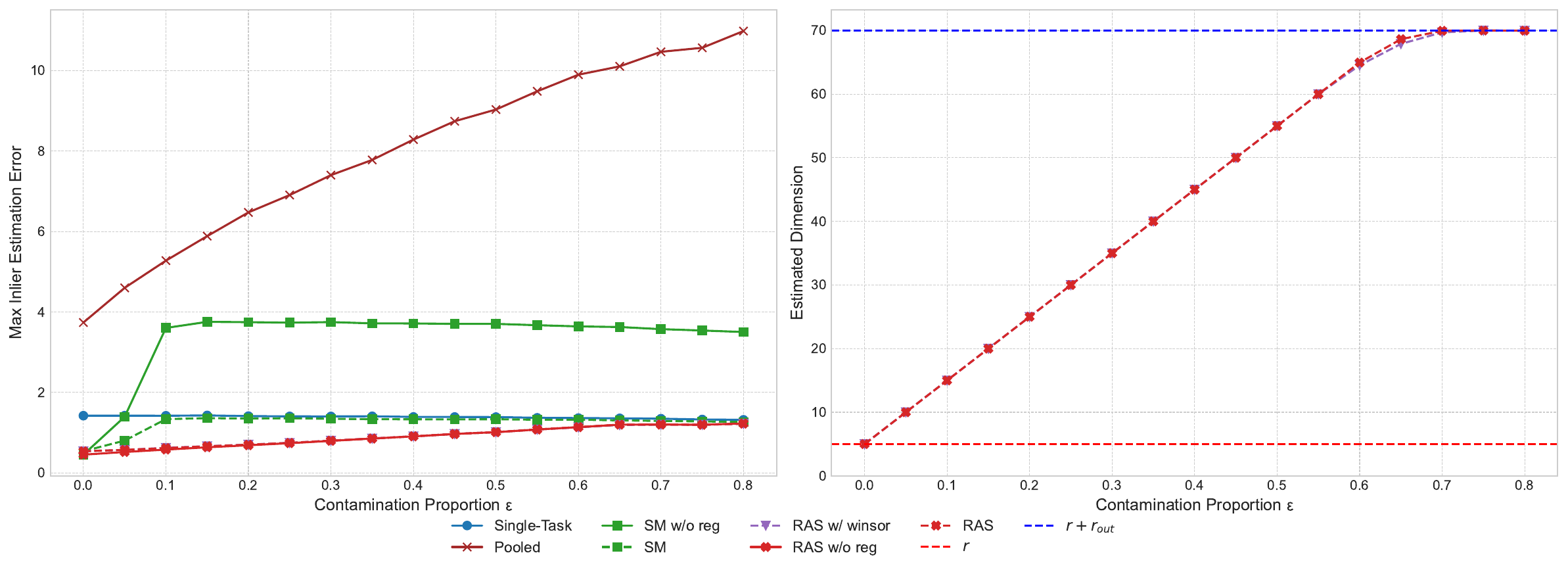}
  \caption{
    $n=150, p=80, T=100, r=5, r_{\textup{out}}=65.$
  In the left panel, the vertical axis is the maximum inlier estimation error $\max_{t \in S}\twonorm{\hbetak{t}-\bbetaks{t}}$.
   The horizontal axis is the contamination proportion.
   In the right panel, the vertical axis is the estimated dimension $\hat{k}$, and the horizontal axis is the contamination proportion.
   Each point is computed as the average of 100 replications.}
  \label{fig:general-rank} %
\end{figure}

The results are shown in Figure \ref{fig:general-rank}.
With slightly large contamination proportion, SM w/o reg performs worse than single-task regression, since the learned representation is corrupted by the contamination. SM, with the biased regularization step, achieve similar performance as single-task regression.
This shows that with general-rank outliers, using SM does not bring benefit with the existence of even small amount of contaminated tasks.

In contrast, RAS, RAS w/o reg consistently outperform the single-task regression, with the performance gain adaptive to the contamination proportion.
As $\varepsilon$ increases, more outlier signals surpass the threshold $\tau$, gradually increasing the estimated dimension $\hat{k}$.
And as our Theorem \ref{thm:inlier_error} demonstrates, the inlier estimation error increases.
Note that even with modest to large level of contamination proportion, RAS still shows meaningful gains over the single-task regression and SM.

Since the outlier magnitude is much larger than that of the inliers, the winsorization step does not harm the inliers, and thus RAS w/ winsor shows similar performance as RAS.
Pooled regression still shows the worst performance as expected.

\subsection{Impact of Different Number of Tasks}
In this section, we present results with varing number of tasks. We follow the same data generating process as in Section \ref{sec:low-rank-outliers}. 
We consider the following parameter setting.
\begin{itemize}
    \item $n=100, p=50, \varepsilon=0.5, r=5, r_{\textup{out}}=5.$
\end{itemize}
We vary $T$ from $25$ to $190$ with step size $15$.

\begin{figure}[htbp]
  \centering
  \includegraphics[width=\linewidth]{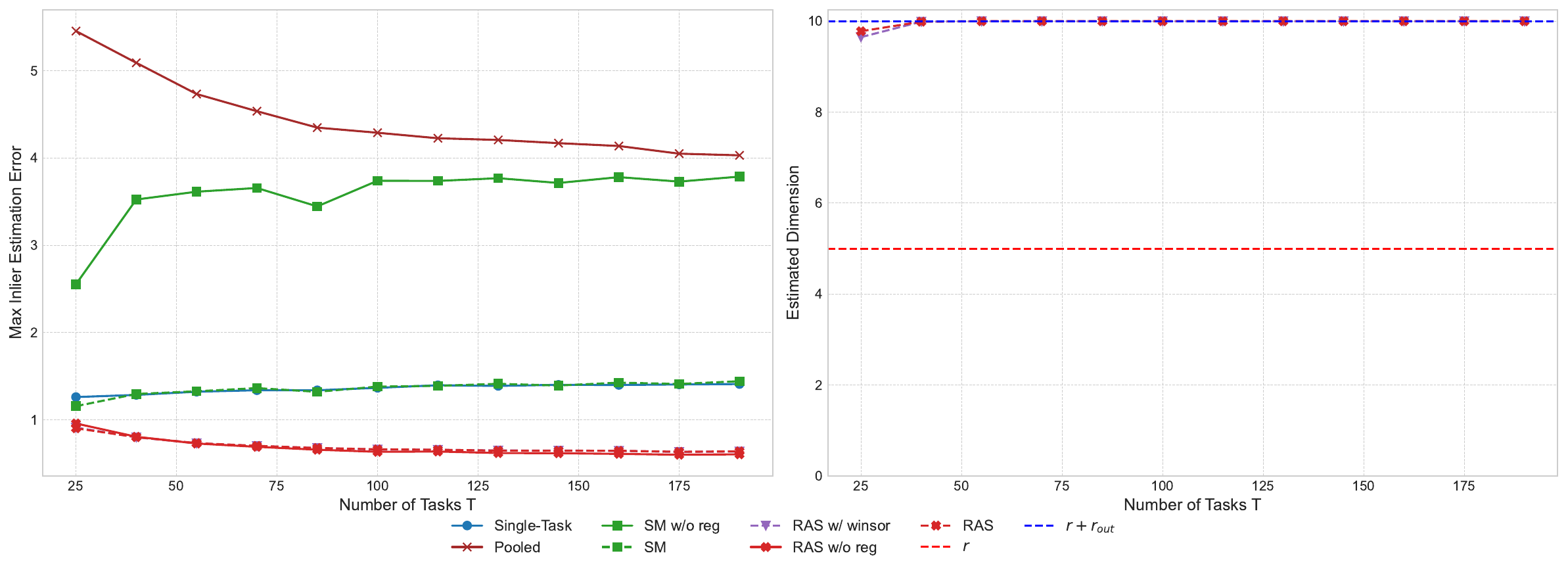}
   \caption{
  $n=100, p=50, \varepsilon=0.5, r=5, r_{\textup{out}}=5.$
  In the left panel, the vertical axis is the maximum inlier estimation error $\max_{t \in S}\twonorm{\hbetak{t}-\bbetaks{t}}$.
   The horizontal axis is the number of tasks .
   In the right panel, the vertical axis is the estimated dimension $\hat{k}$, and the horizontal axis is the number of tasks.
   Each point is computed as the average of 100 replications.}
  \label{fig:varyT} %
\end{figure}

The results are shown in Figure \ref{fig:varyT}. Note that with increasing $T$, the inlier estimation error of RAS and RAS w/o reg first decrease and then stabilize. This matches the upper bound in Theorem \ref{thm:inlier_error} and Corollary \ref{corollary:inlier-error}. 
With $h=0$, and ignoring the log factors, let us consider the following term in the upper bound:
$\sqrt{\frac{\hat{k}}{n}} + 
    \frac{\zetak{t}}{ \barzeta 
    } \left( 
    \frac{1}{\sqrt{1-\varepsilon}}
    \sqrt{\frac{pr}{nT}}
    +
    \sqrt{\frac{r}{n}}
    \right)$.
With $T$ increasing, the term $\sqrt{\frac{pr}{nT}}$ decreases. And when $T$ gets large enough, the term regarding $\sqrt{\frac{\hat{k}}{n}}$ dominates the upper bound, so the inlier estimation error stablizes. The RAS w/ winsor performs similarly with RAS and RAS w/o reg, since the magnitude of outliers is large relative to the inliers, and winsorization mostly operate on the outliers.

On the other hand, the maximum inlier estimation error of SM and SM w/o reg does not decrease, and even increases slightly, with larger $T$. This is because the contaminated tasks ruin the learned representation of SM and SM w/o reg. And with the help of the biased regularization step using single-task data, the performance of SM equals that of single-task regression. With increasing $T$, the maximum inlier estimation error of single-task regression has slight increase, and so does SM.

The right panel of Figure \ref{fig:varyT} shows the relationship between the estimated dimension $\hat{k}$ and $T$.
When $T=25$, with $\varepsilon=0.5$, the number of inliers and outliers are both small, therefore among the $100$ repetitions, RAS occasionally misses a small proportion of the combined signal dimension, leading to an average estimated dimension slightly below $r+r_{\textup{out}}$. However, for all the other larger $T$ values, RAS can recover all the combined signal dimensions in all the repetitions.

\section{Discussion}\label{sec:discussion}

In this paper, we address a critical and practical challenge in multi-task learning: the presence of an unknown and potentially large fraction of contaminated tasks. We introduce the Robust and Adaptive Spectral method (RAS), a computationally efficient algorithm that learns a shared representation from a collection of tasks without prior knowledge of the contamination level or the true underlying rank. Our theoretical analysis provides non-asymptotic error bounds that guarantee robustness, adaptivity to the structure of both inliers and outliers, and protection against negative transfer. Extensive experiments corroborate our theory, demonstrating that RAS maintains strong performance even when up to 80\% of the tasks are contaminated, significantly outperforming existing methods that rely on oracle knowledge or fragile heuristics.

In conclusion, RAS provides a robust, adaptive, and theoretically-grounded framework for representation learning in contaminated multi-task learning and transfer learning settings.
By moving beyond the clean-data paradigm, this work helps bridge the gap between MTL/TL theory and its practical application in complex real-world environments, opening up new possibilities for reliable knowledge sharing across diverse and potentially corrupted sources of data.

\bibliographystyle{ims}
\bibliography{reference}

\newpage 
\appendix
\numberwithin{equation}{section}
\section{Proof of Results in Sections \ref{sec:theory} and \ref{sec:TL}}

\subsection{Proof of Proposition \ref{thm:rank_general}
}
\begin{proof}
Recall that
$$\bdeltaks{t} = \oA^\perp(\oA^\perp)^\top
\bbetaks{t}=\oA^\perp(\oA^\perp)^\top
\bAks{t}\bthetaks{t},
$$
thus we have 
\begin{equation}\label{eq:delta-t-star-bd}
\twonorm{\bdeltaks{t}} \leq \twonorm{ (\oA)^\perp  ((\oA)^\perp)^\top 
\bAks{t}\bthetaks{t}} \leq \twonorm{\bAks{t}(\bAks{t})^\top - \oA(\oA)^\top}\zetak{t} \leq h\zetak{t}.
\end{equation}
Note that 
$$
\frac{1}{\sqrt{T}}\twonorm{\bm{D}^*_S} \leq \frac{1}{\sqrt{T}}\fnorm{\bm{D}^*_S}
\bigg[
\frac{\sigmamax(\bm{D}^*_S)}{\sqrt{r}\sigmamin(\bm{D}^*_S)} 
\wedge 1 \bigg],
$$
and 
$ \frac{1}{\sqrt T}
\fnorm{\bm{D}_S^*}
\leq \sqrt{1-\varepsilon} h \barzeta.$
By Lemma \ref{lemma:noise_opnorm},
 \wpp,
\begin{equation}
   \frac{1}{\sqrt T}
    \opnorm{\bN} \lesssim 
    \sqrt{\frac{p+\cardS}{nT}}
    +
    h \barzeta \sqrt{1-\varepsilon}
\bigg[\frac{\sigmamax(\bm{D}^*_S)}{\sqrt{r}\sigmamin(\bm{D}^*_S)} \wedge 1\bigg]    
\end{equation}

Therefore, by our choice of 
$$
    \tau 
    = C'
    \left(
    \sqrt{\frac{p+\cardS}{nT}}
    + h \barzeta \sqrt{1-\varepsilon}
    \bigg[\frac{\sigmamax(\bm{D}^*_S)}{\sqrt{r}\sigmamin(\bm{D}^*_S)} \wedge 1\bigg]
    \right)
    $$
 with a sufficiently large constant 
 $C'$,
we have, \wpp,
\begin{equation}
    \opnorm{\matr{N}/\sqrt{T}} 
    < 0.25 \tau. 
\end{equation}

Let $\hat{\lambda}_k = \sigma_k(\widehat{\matr{B}}_{\text{st}}/\sqrt{T})$ be the observed singular values and $\widetilde{\lambda}_k = \sigma_k( \widetilde{\matr{B}}  /\sqrt{T})$ be the true singular values. By Weyl's inequality, for any $k$, we have:
\begin{equation}
    |\hat{\lambda}_k - \widetilde{\lambda}_k| \le \opnorm{\matr{N}/\sqrt{T}} < 0.25\tau.
\end{equation}

The rank $\hat{k}$ is estimated as the number of singular values $\hat{\lambda}_k$ greater than $\tau$. We check the singular values around the index $k = r_{\text{eff}}$.

{At $k = r_{\text{eff}}$}:
By the definition of $r_{\text{eff}}$, we have $\widetilde{\lambda}_{r_{\text{eff}}} > 1.25\tau$. Using Weyl's inequality:
\begin{equation}
\hat{\lambda}_{r_{\text{eff}}} \ge \widetilde{\lambda}_{r_{\text{eff}}} - \opnorm{\matr{N}/\sqrt{T}} > 1.25\tau - 0.25\tau  =  \tau.
\end{equation}
This implies that at least $r_{\text{eff}}$ singular values of the observed matrix will be above the threshold $\tau$.

{At $k = r_{\text{eff}}+1$}:
By the spectral gap assumption,
we have $\widetilde{\lambda}_{r_{\text{eff}}+1} < 0.75\tau$. Using Weyl's inequality:
\begin{equation}
\hat{\lambda}_{r_{\text{eff}}+1} \le \widetilde{\lambda}_{r_{\text{eff}}+1} + \opnorm{\matr{N}/\sqrt{T}} < 0.75\tau + 0.25\tau = \tau.
\end{equation}
This concludes the proof.
\end{proof}

\subsection{Proof of Lemma \ref{lemma:low rank outlier}}
\begin{proof}
    Let $\widebar{\bB}=[\widebar{\bB}_S, \widebar{\bB}_{S^c}]$. By the dimension formula for the sum of vector spaces, 
    $\textup{rank}(\widebar{\bB})=r + r_{\textup{out}} - r_\cap$.
    Note that $\widetilde{\bB}=
    \widebar{\bB} + [\bm{0} \,, \widetilde{\bN}].$ Thus, by Weyl's inequality, we have, 
    \begin{equation}    \abs{\sigma_j(\widetilde{\bB}) 
        - \sigma_j(\widebar{\bB})} \leq \opnorm{\widetilde{\bN}} \leq \frac{3}{4} \tau. 
    \end{equation}
    Since $\card{ \{ r \mid \sigma_r(\widebar{\bB}) > 
    0
    \}} \leq 
    \text{rank}(\widebar{\bB})$, we have, $\card{ \{ r \mid \sigma_r(\widetilde{\bB}) > \frac{3}{4} \tau \}} \leq 
    r+r_{\textup{out}}-r_\cap.
    $
\end{proof}

\subsection{Proof of Proposition \ref{prop:subspace_error_bd_general}}
\begin{proof}

Let $\Abarhat^\perp$ be an orthonormal basis for the orthogonal complement of $\mathrm{span}(\Abarhat)$. We have,
\begin{equation}
    \opnorm{(\matr{I} - {\matr{P}}_{\Abarhat}) \oA} = \opnorm{(\Abarhat^\perp)^\top \oA}.
\end{equation}

By definition of the SVD, the columns of $\Abarhat$ are the top $\hat{k}$ left singular vectors of $\widehat{\matr{B}}_{\text{st}}$. Let $\widehat{\matr{B}}_{\text{st}} = \sum_{j=1}^{\min(p,T)} \sigma_j u_j v_j^\top$. Then $\Abarhat = [u_1, \dots, u_{\hat{k}}]$ and we can choose $\Abarhat^\perp = [u_{\hat{k}+1}, \dots, u_p]$.
 We have:
\begin{equation}
(\Abarhat^\perp)^\top \widehat{\matr{B}}_{\text{st}} = \begin{pmatrix} u_{\hat{k}+1}^\top \\ \vdots \\ u_p^\top \end{pmatrix} \sum_{j=1}^{\min(p,T)} \sigma_j u_j v_j^\top = \begin{pmatrix} \sigma_{\hat{k}+1} v_{\hat{k}+1}^\top \\ \vdots \\ \sigma_p v_p^\top \end{pmatrix}.
\end{equation}
The operator norm of this matrix is $\opnorm{(\Abarhat^\perp)^\top \widehat{\matr{B}}_{\text{st}}} = \sigma_{\hat{k}+1}(\widehat{\matr{B}}_{\text{st}})$. From our rank estimation procedure,  $\sigma_{\hat{k}+1}(\widehat{\matr{B}}_{\text{st}}) < \sqrt{T}\tau$.

Recall that
$$\bdeltaks{t} = \oA^\perp(\oA^\perp)^\top
\bbetaks{t}=\oA^\perp(\oA^\perp)^\top
\bAks{t}\bthetaks{t},
$$
thus we have 
\begin{equation}
\twonorm{\bdeltaks{t}} \leq \twonorm{ (\oA)^\perp  ((\oA)^\perp)^\top 
\bAks{t}\bthetaks{t}} \leq \twonorm{\bAks{t}(\bAks{t})^\top - \oA(\oA)^\top}\zetak{t} \leq h\zetak{t}.
\end{equation}
Note that 
$$
\frac{1}{\sqrt{T}}\twonorm{\bm{D}^*_S} \leq \frac{1}{\sqrt{T}}\fnorm{\bm{D}^*_S}
\bigg[
\frac{\sigmamax(\bm{D}^*_S)}{\sqrt{r}\sigmamin(\bm{D}^*_S)} 
\wedge 1 \bigg],
$$
and 
$ \frac{1}{\sqrt T}
\fnorm{\bm{D}_S^*}
\leq \sqrt{1-\varepsilon} h \barzeta, $
we have, \wpp,
\begin{equation}
   \frac{1}{\sqrt T}
    \twonorm{\bN} \lesssim 
    \sqrt{\frac{p+\cardS}{nT}}
    +
    h \barzeta \sqrt{1-\varepsilon}
\bigg[\frac{\sigmamax(\bm{D}^*_S)}{\sqrt{r}\sigmamin(\bm{D}^*_S)} \wedge 1\bigg]    
\end{equation}

Now, we use the identity $\widehat{\matr{B}}_{\text{st}} = \widetilde{\bm{B}} + \matr{N}$:
\begin{equation}
(\Abarhat^\perp)^\top \widetilde{\bm{B}} = (\Abarhat^\perp)^\top \widehat{\matr{B}}_{\text{st}} - (\Abarhat^\perp)^\top \matr{N}.
\end{equation}
Let us consider only the columns of this matrix equation corresponding to the inlier tasks, $t \in S$. Let $\matr{\Pi}_S \in \R^{T \times |S|}$ be a selection matrix for these columns.
\begin{equation} \label{eq:key_identity}
(\Abarhat^\perp)^\top \widetilde{\bm{B}} \matr{\Pi}_S = (\Abarhat^\perp)^\top \widehat{\matr{B}}_{\text{st}} \matr{\Pi}_S - (\Abarhat^\perp)^\top \matr{N} \matr{\Pi}_S.
\end{equation}
The left hand side is exactly $(\Abarhat^\perp)^\top \widetilde{\bm{B}}_S = ((\Abarhat^\perp)^\top \oA) \matr{\Theta}_S^*$. Taking operator norms of both sides of \eqref{eq:key_identity}:
\begin{equation}
\opnorm{((\Abarhat^\perp)^\top \oA) \matr{\Theta}_S^*} \le \opnorm{(\Abarhat^\perp)^\top \widehat{\matr{B}}_{\text{st}} \matr{\Pi}_S} + \opnorm{(\Abarhat^\perp)^\top \matr{N} \matr{\Pi}_S}.
\end{equation}
We bound the terms:
\begin{itemize}
    \item $\opnorm{((\Abarhat^\perp)^\top \oA) \matr{\Theta}_S^*} \ge \opnorm{(\Abarhat^\perp)^\top \oA} \cdot \sigma_r(\matr{\Theta}_S^*)$.
    \item $\opnorm{(\Abarhat^\perp)^\top \widehat{\matr{B}}_{\text{st}} \matr{\Pi}_S} \le \opnorm{(\Abarhat^\perp)^\top \widehat{\matr{B}}_{\text{st}}} \le \sigma_{\hat{k}+1}(\widehat{\matr{B}}_{\text{st}}) < \sqrt{T}\tau$.
    \item $\opnorm{(\Abarhat^\perp)^\top \matr{N} \matr{\Pi}_S} \le \opnorm{\matr{N}} \lesssim \sqrt{T}\tau$.
\end{itemize}
Combining these inequalities, we have:
\begin{equation}
\opnorm{(\Abarhat^\perp)^\top \oA} \cdot \sigma_r(\matr{\Theta}_S^*) \lesssim \sqrt{T}\tau.
\end{equation}
Using Assumption \ref{assump:diversity}, $\sigma_r(\matr{\Theta}_S^*) \ge \sqrt{\card{S}}\sigma_{\min,\text{in}}$. This gives the  subspace error bound:
\begin{equation}
\opnorm{(\matr{I} - {\matr{P}}_{\Abarhat}) \oA} = \opnorm{(\Abarhat^\perp)^\top \oA} \lesssim \frac{\sqrt{T}\tau}{\sqrt{\card{S}}\sigma_{\min,\text{in}}} = \frac{\tau}{\sigma_{\min,\text{in}} \sqrt{1-\varepsilon} }.
\end{equation}
Substituting the definition of $\tau$:
\begin{equation}
\opnorm{(\matr{I} - {\matr{P}}_{\Abarhat}) \oA} \lesssim \frac{1}{\sigma_{\min,\text{in}} \sqrt{1-\varepsilon}} 
\left(
\sqrt{\frac{p+\cardS}{nT}}
+h \barzeta \sqrt{1-\varepsilon}
\bigg[\frac{\sigmamax(\bm{D}^*_S)}{\sqrt{r}\sigmamin(\bm{D}^*_S)} \wedge 1\bigg]
\right).
\end{equation}
\end{proof}

\subsection{Proof of Theorem \ref{thm:inlier_error}}
\begin{proof}
Let ${\matr{P}}_{\Abarhat} = \Abarhat\Abarhat^\top$ be the projector onto the estimated subspace. Define the projection of the true coefficient vector onto this subspace, $\bbeta^{(t)}_{\text{proj}} = {\matr{P}}_{\Abarhat} \bbeta^{(t)*}$.
Recall that 
$\hthetak{t} = \argmin_{\btheta \in \mathbb{R}^{\hat{k}}}\fk{t}
(\Abarhat \btheta).$
By the triangle inequality, 
\begin{equation}
\norm{\Abarhatthetahatt - \bbeta^{(t)*} }_2 \le 
\norm{\Abarhatthetahatt - \bbeta^{(t)}_{\text{proj}} }_2 
+ 
\norm{\bbeta^{(t)}_{\text{proj}} - \bbeta^{(t)*} }_2.
\end{equation}
The approximation error term
\begin{equation}
\norm{\bbeta^{(t)}_{\text{proj}} - \bbeta^{(t)*} }_2 = \norm{({\matr{P}}_{\Abarhat} - \matr{I}) \bbeta^{(t)*} }_2
\leq \twonorm{\bbetaks{t}-\bP_{\oA} \bbetaks{t}}+ 
\twonorm{\bP_{\oA} \bbetaks{t} - \bP_{\hbarA} \bbetaks{t}}.
\end{equation}
By \eqref{eq:delta-t-star-bd}, $\twonorm{\bbetaks{t}-\bP_{\oA} \bbetaks{t}} \leq h \zetak{t}.$
Note that, by Proposition \ref{prop:subspace_error_bd_general},
\begin{equation}
    \twonorm{\bP_{\oA} \bbetaks{t} - \bP_{\hbarA} \bbetaks{t}} 
    \leq 
    \opnorm{(\hbarA^\perp)^\top \oA}
    \twonorm{\bbetaks{t}}
    \lesssim 
    \frac{\tau}{\sigmaminin \sqrt{1-\varepsilon}} \zetak{t}.
\end{equation}
Therefore, 
\begin{equation} \label{eq:bias_bound}
\norm{({\matr{P}}_{\Abarhat} - \matr{I}) \bbeta^{(t)*} }_2
\lesssim 
\frac{\zetak{t} \tau}
{\sigma_{\min,\text{in}} \sqrt{1-\varepsilon} } 
+
h \zetak{t}.
\end{equation}

Now we bound the estimation error term
$\norm{\Abarhatthetahatt - \bbeta^{(t)}_{\text{proj}} }_2 
$.
Recall that $\widehat{\matr{\Sigma}}^{(t)} = \frac{1}{n}\matr{X}^{(t)\top}\matr{X}^{(t)}$ is the sample covariance.
And $\thetahatt = (\Abarhat^\top \widehat{\matr{\Sigma}}^{(t)} \Abarhat)^{-1} \frac{1}{n} \Abarhat^\top \matr{X}^{(t)\top} y^{(t)}$.
Substituting $y^{(t)} = \matr{X}^{(t)}\bbeta^{(t)*} + \epsilon^{(t)}$:
\begin{equation}
\thetahatt = (\Abarhat^\top \widehat{\matr{\Sigma}}^{(t)} \Abarhat)^{-1} \Abarhat^\top \widehat{\matr{\Sigma}}^{(t)} \bbeta^{(t)*} + 
(\Abarhat^\top \widehat{\matr{\Sigma}}^{(t)} \Abarhat)^{-1} \frac{1}{n} \Abarhat^\top \matr{X}^{(t)\top} \epsilon^{(t)}.
\end{equation}
The estimation error term is $\norm{\Abarhatthetahatt - {\matr{P}}_{\Abarhat} \bbeta^{(t)*}}_2 = \norm{\Abarhat\thetahatt - \Abarhat\Abarhat^\top \bbeta^{(t)*}}_2$.
Note that
\begin{align}
\thetahatt - \Abarhat^\top \bbeta^{(t)*} &= \left[ (\Abarhat^\top \widehat{\matr{\Sigma}}^{(t)} \Abarhat)^{-1} \Abarhat^\top \widehat{\matr{\Sigma}}^{(t)} - \Abarhat^\top \right] \bbeta^{(t)*} + (\Abarhat^\top \widehat{\matr{\Sigma}}^{(t)} \Abarhat)^{-1} \frac{1}{n} \Abarhat^\top \matr{X}^{(t)\top} \epsilon^{(t)} \\
&= (\Abarhat^\top \widehat{\matr{\Sigma}}^{(t)} \Abarhat)^{-1} \left[ \Abarhat^\top \widehat{\matr{\Sigma}}^{(t)} - \Abarhat^\top \widehat{\matr{\Sigma}}^{(t)} \Abarhat \Abarhat^\top \right] \bbeta^{(t)*} + 
(\Abarhat^\top \widehat{\matr{\Sigma}}^{(t)} \Abarhat)^{-1} \frac{1}{n} \Abarhat^\top \matr{X}^{(t)\top} \epsilon^{(t)} \\
&= (\Abarhat^\top \widehat{\matr{\Sigma}}^{(t)} \Abarhat)^{-1} \Abarhat^\top \widehat{\matr{\Sigma}}^{(t)} (\matr{I} - {\matr{P}}_{\Abarhat}) \bbeta^{(t)*} + (\Abarhat^\top \widehat{\matr{\Sigma}}^{(t)} \Abarhat)^{-1} \frac{1}{n} \Abarhat^\top \matr{X}^{(t)\top} \epsilon^{(t)}.
\end{align}
The estimation error is therefore bounded by:
\begin{equation}
\norm{\Abarhatthetahatt - \bbeta^{(t)}_{\text{proj}}}_2 \le 
\opnorm{\Abarhat(\Abarhat^\top \widehat{\matr{\Sigma}}^{(t)} \Abarhat)^{-1} \Abarhat^\top \widehat{\matr{\Sigma}}^{(t)}} 
\norm{(\matr{I} - {\matr{P}}_{\Abarhat}) \bbeta^{(t)*}}_2 + \norm{(\Abarhat^\top \widehat{\matr{\Sigma}}^{(t)} \Abarhat)^{-1} \frac{1}{n} \Abarhat^\top \matr{X}^{(t)\top} \epsilon^{(t)}}.
\end{equation}
By Lemma \ref{lemma:cov-hat},
the operator norm of the term 
$\opnorm{\Abarhat(\Abarhat^\top \widehat{\matr{\Sigma}}^{(t)} \Abarhat)^{-1} \Abarhat^\top \widehat{\matr{\Sigma}}^{(t)}} $
is bounded by a constant. 
By standard linear regression results, the noise part 
$\norm{(\Abarhat^\top \widehat{\matr{\Sigma}}^{(t)} \Abarhat)^{-1} \frac{1}{n} \Abarhat^\top \matr{X}^{(t)\top} \epsilon^{(t)}}$
is bounded by $\mathcal{O}_{\tP}(\sqrt{\hat{k}/n})$.
Thus,  \wpr,
\begin{equation}
\norm{\Abarhatthetahatt - {\matr{P}}_{\Abarhat} \bbeta^{(t)*}}_2
\lesssim 
\norm{({\matr{P}}_{\Abarhat} - \matr{I}) \bbeta^{(t)*} }_2 + \sqrt{\frac{\hat{k}}{n}}.
\end{equation}
Combining the Bounds, we get, \wpr
\begin{equation}
    \norm{\Abarhatthetahatt - \bbeta^{(t)*} }_2 
    \lesssim 
    \sqrt{\frac{\hat{k}}{n}} + 
    h \zetak{t}
    +
    \frac{\zetak{t} \tau}{\sigma_{\min,\text{in}} \sqrt{1-\varepsilon}} .
\end{equation}
Plugging in $\tau$, we have, \wpr
\begin{equation}
    \norm{\Abarhatthetahatt - \bbeta^{(t)*} }_2 
    \lesssim 
    \sqrt{\frac{\hat{k}}{n}} + 
    h \zetak{t}
    +
    \frac{\zetak{t} }{\sigma_{\min,\text{in}} \sqrt{1-\varepsilon}} 
    \sqrt{\frac{p+\cardS}{nT}}
    +
    \frac{\zetak{t}}{\sigmaminin}
    \barzeta h 
    \bigg[\frac{\sigmamax(\bm{D}^*_S)}{\sqrt{r}\sigmamin(\bm{D}^*_S)} \wedge 1\bigg].
\end{equation}

Define $\eta^{(t)} = 
\sqrt{\frac{\hat{k}}{n}} + 
\sqrt{\frac{\log T}{n}}
 + 
    h \zetak{t}
    +
    \frac{\zetak{t} }{\sigma_{\min,\text{in}} \sqrt{1-\varepsilon}} 
    \sqrt{\frac{p+\cardS}{nT}}
    +
    \frac{\zetak{t}}{\sigmaminin}
    \barzeta h 
    \bigg[\frac{\sigmamax(\bm{D}^*_S)}{\sqrt{r}\sigmamin(\bm{D}^*_S)} \wedge 1\bigg].$
Given $\gamma = C'\sqrt{p+\log T}$ with a large constant $C' > 0$,
for any inlier task $t$ satisfying $\eta^{(t)} \leq C\sqrt{\frac{p + \log T}{n}}$, we have,
\begin{equation}
	\twonorm{\nabla \fk{t}(\bAks{t}\bthetaks{t})} + C\twonorm{
    \Abarhat
    \hthetak{t}-\bAks{t}\bthetaks{t}} \lesssim \frac{\gamma}{\sqrt{n}}.
\end{equation}
By Lemma \ref{lemma:safe net mtl}(\rom{1}), \wpr, we have,
$$
\twonorm{\hbetak{t} - \bbetaks{t}} \lesssim \eta^{(t)}.
$$
For any inlier task $t$ satisfying $\eta^{(t)} > C\sqrt{\frac{p + \log T}{n}}$, by Lemma \ref{lemma:safe net mtl}(\rom{2}), \wppp, 
$$
\max_{t \in S}\twonorm{\hbetak{t} - \bbetaks{t}} \lesssim \frac{\gamma}{\sqrt{n}} + \max_{t \in S}\twonorm{\widetilde{\bbeta}^{(t)} - \bbetaks{t}} \lesssim \sqrt{\frac{p+\log T}{n}}
$$ 
where $\widetilde{\bbeta}^{(t)} \in \argmin_{\bbeta \in \mathbb{R}^p}\fk{t}(\bbeta) $. 

For outlier tasks satisfying the linear model, Lemma \ref{lemma:safe net mtl}(\rom{3}) gives the upper bound 
$$
\max_{t \in S^c}\twonorm{\hbetak{t} - \bbetaks{t}} \lesssim \frac{\gamma}{\sqrt{n}} + \max_{t \in S^c}\twonorm{\widetilde{\bbeta}^{(t)} - \bbetaks{t}} \lesssim \sqrt{\frac{p+\log T}{n}}
$$ 
where $\widetilde{\bbeta}^{(t)} \in \argmin_{\bbeta \in \mathbb{R}^p}\fk{t}(\bbeta) $. 
\end{proof}

\subsection{Proof of Proposition \ref{thm:TL-error}}
\begin{proof}
Let ${\matr{P}}_{\Abarhat} = \Abarhat\Abarhat^\top$ be the projector onto the estimated subspace. Define the projection of the true coefficient vector onto this subspace, $\bbeta^{(0)}_{\text{proj}} = {\matr{P}}_{\Abarhat} \bbeta^{(0)*}$.
Recall that 
$\hthetak{0} = \argmin_{\btheta \in \mathbb{R}^{\hat{k}}}\fk{0}
(\Abarhat \btheta).$
By the triangle inequality, 
\begin{equation}
\norm{\Abarhat \thetahatk{0} - \bbeta^{(0)*} }_2 \le 
\norm{\Abarhat \thetahatk{0} - \bbeta^{(0)}_{\text{proj}} }_2 
+ 
\norm{\bbeta^{(0)}_{\text{proj}} - \bbeta^{(0)*} }_2.
\end{equation}
The approximation error term
\begin{equation}
\norm{\bbeta^{(0)}_{\text{proj}} - \bbeta^{(0)*} }_2 = \norm{({\matr{P}}_{\Abarhat} - \matr{I}) \bbeta^{(0)*} }_2
\leq \twonorm{\bbetaks{0}-\bP_{\oA} \bbetaks{0}}+ 
\twonorm{\bP_{\oA} \bbetaks{0} - \bP_{\hbarA} \bbetaks{0}}.
\end{equation}
By \eqref{eq:delta-t-star-bd}, $\twonorm{\bbetaks{0}-\bP_{\oA} \bbetaks{0}} \leq h_0 \zetak{0}.$
Note that, by Proposition \ref{prop:subspace_error_bd_general},
\begin{equation}
    \twonorm{\bP_{\oA} \bbetaks{0} - \bP_{\hbarA} \bbetaks{0}} 
    \leq 
    \opnorm{(\hbarA^\perp)^\top \oA}
    \twonorm{\bbetaks{0}}
    \lesssim 
    \frac{\tau}{\sigmaminin \sqrt{1-\varepsilon}} \zetak{0}.
\end{equation}
Therefore, 
\begin{equation} 
\norm{({\matr{P}}_{\Abarhat} - \matr{I}) \bbeta^{(0)*} }_2
\lesssim 
\frac{\zetak{0} \tau}
{\sigma_{\min,\text{in}} \sqrt{1-\varepsilon} } 
+
h_0 \zetak{0}.
\end{equation}

Now we bound the estimation error term
$\norm{\Abarhat \thetahatk{0} - \bbeta^{(0)}_{\text{proj}} }_2 
$.
Recall that $\widehat{\matr{\Sigma}}^{(0)} = \frac{1}{n_0}\matr{X}^{(0)\top}\matr{X}^{(0)}$ is the sample covariance.
And $\thetahatk{0} = (\Abarhat^\top \widehat{\matr{\Sigma}}^{(0)} \Abarhat)^{-1} \frac{1}{n_0} \Abarhat^\top \matr{X}^{(0)\top} y^{(0)}$.
Substituting $y^{(0)} = \matr{X}^{(0)}\bbeta^{(0)*} + \epsilon^{(0)}$:
\begin{equation}
\thetahatk{0} = (\Abarhat^\top \widehat{\matr{\Sigma}}^{(0)} \Abarhat)^{-1} \Abarhat^\top \widehat{\matr{\Sigma}}^{(0)} \bbeta^{(0)*} + 
(\Abarhat^\top \widehat{\matr{\Sigma}}^{(0)} \Abarhat)^{-1} \frac{1}{n_0} \Abarhat^\top \matr{X}^{(0)\top} \epsilon^{(0)}.
\end{equation}
The estimation error term is $\norm{\Abarhat \thetahatk{0} - {\matr{P}}_{\Abarhat} \bbeta^{(0)*}}_2 = \norm{\Abarhat\thetahatk{0} - \Abarhat\Abarhat^\top \bbeta^{(0)*}}_2$.
Note that
\begin{align}
\thetahatk{0} - \Abarhat^\top \bbeta^{(0)*} &= \left[ (\Abarhat^\top \widehat{\matr{\Sigma}}^{(0)} \Abarhat)^{-1} \Abarhat^\top \widehat{\matr{\Sigma}}^{(0)} - \Abarhat^\top \right] \bbeta^{(0)*} + (\Abarhat^\top \widehat{\matr{\Sigma}}^{(0)} \Abarhat)^{-1} \frac{1}{n_0} \Abarhat^\top \matr{X}^{(0)\top} \epsilon^{(0)} \\
&= (\Abarhat^\top \widehat{\matr{\Sigma}}^{(0)} \Abarhat)^{-1} \left[ \Abarhat^\top \widehat{\matr{\Sigma}}^{(0)} - \Abarhat^\top \widehat{\matr{\Sigma}}^{(0)} \Abarhat \Abarhat^\top \right] \bbeta^{(0)*} + 
(\Abarhat^\top \widehat{\matr{\Sigma}}^{(0)} \Abarhat)^{-1} \frac{1}{n_0} \Abarhat^\top \matr{X}^{(0)\top} \epsilon^{(0)} \\
&= (\Abarhat^\top \widehat{\matr{\Sigma}}^{(0)} \Abarhat)^{-1} \Abarhat^\top \widehat{\matr{\Sigma}}^{(0)} (\matr{I} - {\matr{P}}_{\Abarhat}) \bbeta^{(0)*} + (\Abarhat^\top \widehat{\matr{\Sigma}}^{(0)} \Abarhat)^{-1} \frac{1}{n_0} \Abarhat^\top \matr{X}^{(0)\top} \epsilon^{(0)}.
\end{align}
The estimation error is therefore bounded by:
\begin{equation}
\norm{\Abarhat \thetahatk{0} - \bbeta^{(0)}_{\text{proj}}}_2 \le 
\opnorm{\Abarhat(\Abarhat^\top \widehat{\matr{\Sigma}}^{(0)} \Abarhat)^{-1} \Abarhat^\top \widehat{\matr{\Sigma}}^{(0)}} 
\norm{(\matr{I} - {\matr{P}}_{\Abarhat}) \bbeta^{(0)*}}_2 + \norm{(\Abarhat^\top \widehat{\matr{\Sigma}}^{(0)} \Abarhat)^{-1} \frac{1}{n_0} \Abarhat^\top \matr{X}^{(0)\top} \epsilon^{(0)}}.
\end{equation}
By Lemma \ref{lemma:cov-hat},
the operator norm of the term 
$\opnorm{\Abarhat(\Abarhat^\top \widehat{\matr{\Sigma}}^{(0)} \Abarhat)^{-1} \Abarhat^\top \widehat{\matr{\Sigma}}^{(0)}} $
is bounded by a constant. 
By standard linear regression results, the noise part 
$\norm{(\Abarhat^\top \widehat{\matr{\Sigma}}^{(0)} \Abarhat)^{-1} \frac{1}{n_0} \Abarhat^\top \matr{X}^{(0)\top} \epsilon^{(0)}}$
is bounded by $\mathcal{O}_{\tP}(\sqrt{\hat{k}/n_0})$.
Thus,  \wpr,
\begin{equation}
\norm{\Abarhat \thetahatk{0} - {\matr{P}}_{\Abarhat} \bbeta^{(0)*}}_2
\lesssim 
\norm{({\matr{P}}_{\Abarhat} - \matr{I}) \bbeta^{(0)*} }_2 + \sqrt{\frac{\hat{k}}{n_0}}.
\end{equation}
Combining the Bounds, we get, \wpr
\begin{equation}
    \norm{\Abarhat \thetahatk{0} - \bbeta^{(0)*} }_2 
    \lesssim 
    \sqrt{\frac{\hat{k}}{n_0}} + 
    h_0 \zetak{0}
    +
    \frac{\zetak{0} \tau}{\sigma_{\min,\text{in}} \sqrt{1-\varepsilon}} .
\end{equation}
Plugging in $\tau$, we have, \wpr
\begin{equation}
    \norm{\Abarhat \thetahatk{0} - \bbeta^{(0)*} }_2 
    \lesssim 
    \sqrt{\frac{\hat{k}}{n_0}} + 
    h_0 \zetak{0}
    +
    \frac{\zetak{0} }{\sigma_{\min,\text{in}} \sqrt{1-\varepsilon}} 
    \sqrt{\frac{p+\cardS}{nT}}
    +
    \frac{\zetak{0}}{\sigmaminin}
    \barzeta h 
    \bigg[\frac{\sigmamax(\bm{D}^*_S)}{\sqrt{r}\sigmamin(\bm{D}^*_S)} \wedge 1\bigg].
\end{equation}

Define $\eta^{(0)} = 
\sqrt{\frac{\hat{k}}{n_0}} + 
    h_0 \zetak{0}
    +
    \frac{\zetak{0} }{\sigma_{\min,\text{in}} \sqrt{1-\varepsilon}} 
    \sqrt{\frac{p+\cardS}{nT}}
    +
    \frac{\zetak{0}}{\sigmaminin}
    \barzeta h 
    \bigg[\frac{\sigmamax(\bm{D}^*_S)}{\sqrt{r}\sigmamin(\bm{D}^*_S)} \wedge 1\bigg].$
Given $\gamma = C'\sqrt{p+\log T}$ with a large constant $C' > 0$,
if $\eta^{(0)} \leq C\sqrt{\frac{p }{n_0}}$, we have,
\begin{equation}
	\twonorm{\nabla \fk{0}(\bAks{0}\bthetaks{0})} + C\twonorm{
    \Abarhat    
        \hthetak{0}-\bAks{0}\bthetaks{0}} \lesssim \frac{\gamma}{\sqrt{n_0}}.
\end{equation}
By Lemma \ref{lemma:TL safe net mtl}(\rom{1}), \wpr, we have,
$$
\twonorm{\hbetak{0} - \bbetaks{0}} \lesssim \eta^{(0)}.
$$
If $\eta^{(0)} > C\sqrt{\frac{p}{n_0}}$, by Lemma \ref{lemma:TL safe net mtl}(\rom{2}), \wppp, 
$$
\twonorm{\hbetak{t} - \bbetaks{0}} \lesssim \frac{\gamma}{\sqrt{n_0}} + \max_{t \in S}\twonorm{\widetilde{\bbeta}^{(0)} - \bbetaks{0}} \lesssim \sqrt{\frac{p}{n_0}}
$$ 
where $\widetilde{\bbeta}^{(0)} \in \argmin_{\bbeta \in \mathbb{R}^p}\fk{0}(\bbeta) $. 
\end{proof}

\section{Supplementary Lemmas}
\begin{lemma}
\label{lemma:noise_opnorm}
    Under Assumption \ref{assump:covariate} and \ref{assump:n}, 
    w.p. at least $1-e^{-C(p+ \log T)}$,
\begin{equation}
    \opnorm{( (\widehat{\bB}_{\textup{st}})_S 
    -\bB^*_S)/\sqrt T}
    \lesssim
    \sqrt{\frac{p+\cardS}{nT}}.
\end{equation}
\end{lemma}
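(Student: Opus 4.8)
The plan is to recognize that the matrix in question is $\matr{E}_S := (\widehat{\bB}_{\textup{st}})_S - \bB^*_S \in \R^{p \times \cardS}$, whose $t$-th column is the single-task OLS error $\hbetak{t}_{\st} - \bbetaks{t}$, and to control its operator norm by a conditioning-plus-net argument. First I would write the OLS identity: since $\yk{t}_i = (\bxk{t}_i)^\top \bbetaks{t} + \epsilonk{t}_i$, the estimate $\hbetak{t}_{\st} = (\hSigmak{t})^{-1}\tfrac{1}{n}(\bXk{t})^\top \bYk{t}$ satisfies
\begin{equation}
\hbetak{t}_{\st} - \bbetaks{t} = (\hSigmak{t})^{-1}\tfrac{1}{n}(\bXk{t})^\top \bepsilonk{t} =: \matr{M}^{(t)} \bepsilonk{t}, \qquad \matr{M}^{(t)} := (\hSigmak{t})^{-1}\tfrac{1}{n}(\bXk{t})^\top.
\end{equation}
This splits the randomness into the designs $\{\bXk{t}\}$ (entering $\matr{M}^{(t)}$) and the noise $\{\bepsilonk{t}\}$, with the latter independent across $t \in S$ and across coordinates.

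Second, I would establish a design good event. Under Assumptions \ref{assump:covariate} and \ref{assump:n}, standard sub-Gaussian sample-covariance concentration gives $\opnorm{\hSigmak{t} - \bSigmak{t}} \lesssim \sqrt{(p+\log T)/n}$ for a single task, and a union bound over the at most $T$ tasks in $S$ yields, w.p. at least $1 - e^{-C(p+\log T)}$, that this holds simultaneously for all $t \in S$; this is exactly where the $\log T$ in the stated probability enters. Because $n \ge C(p+\log T)$ with $C$ large, the deviation falls below $c_{\min}/2$, so on this event $\lambdamin(\hSigmak{t}) \ge c_{\min}/2$ and hence $\opnorm{(\hSigmak{t})^{-1}} \le 2/c_{\min}$ for every $t \in S$.

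Third, I would condition on the designs on this good event and treat the noise as the only randomness. Fixing unit vectors $\bu \in \R^p$ and $\bv \in \R^{\cardS}$,
\begin{equation}
\bu^\top \matr{E}_S \bv = \sum_{t \in S} v_t\, \bu^\top \matr{M}^{(t)} \bepsilonk{t}
\end{equation}
is a linear functional of the independent sub-Gaussian noise with variance proxy proportional to $\sum_{t\in S} v_t^2 \twonorm{(\matr{M}^{(t)})^\top \bu}^2$. The key computation is the identity $\twonorm{(\matr{M}^{(t)})^\top \bu}^2 = \tfrac{1}{n}\bu^\top (\hSigmak{t})^{-1}\bu \le \tfrac{2}{c_{\min}\, n}$, so the variance proxy is uniformly $O(1/n)$ (using $\twonorm{\bv}=1$), giving a sub-Gaussian tail of order $\exp(-c n s^2)$. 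I would then take $\tfrac14$-nets of $S^{p-1}$ and $S^{\cardS-1}$, of cardinalities at most $9^{p}$ and $9^{\cardS}$, union bound over the product net, choose $s \asymp \sqrt{(p+\cardS)/n}$ with a large enough constant to make the failure probability at most $e^{-c'(p+\cardS)}$, and apply the standard discretization lemma to pass from the net maximum to $\opnorm{\matr{E}_S} \lesssim \sqrt{(p+\cardS)/n}$. Dividing by $\sqrt T$ yields the claimed bound $\sqrt{(p+\cardS)/(nT)}$.

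The main subtlety is that the bilinear form couples $\bu$ and $\bv$ while the per-task maps $\matr{M}^{(t)}$ are themselves random through the designs; the clean resolution is to first freeze the designs on the well-conditioned event and exploit that, conditionally, the variance proxy is uniformly $O(1/n)$ via the identity $\twonorm{(\matr{M}^{(t)})^\top\bu}^2 = \tfrac1n \bu^\top(\hSigmak{t})^{-1}\bu$. The other delicate point is bookkeeping the two exponents: the $p+\cardS$ in the rate comes from the net cardinality $9^{p+\cardS}$, whereas the $\log T$ in the stated probability is produced by the union bound over tasks in the second step; one should check that $\log T$ is absorbed into the final numerator (e.g.\ via $\cardS \gtrsim \log T$, which holds whenever $\varepsilon$ is bounded away from $1$) or else carry a harmless extra $\log T$ in the rate.
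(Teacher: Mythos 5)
Your proposal is correct and is essentially the standard conditioning-plus-net argument that the paper itself delegates to its references (the paper's ``proof'' is a one-line citation to Vershynin's Theorem 5.39 and Appendix D.8 of \citet{TianGuFeng2023LearningFromSimilar}, both of which proceed exactly as you do: freeze the designs on a well-conditioned event, exploit the identity $\twonorm{(\matr{M}^{(t)})^\top\bu}^2 = \tfrac{1}{n}\bu^\top(\hSigmak{t})^{-1}\bu$ to get a uniform $O(1/n)$ variance proxy, and discretize over $9^{p+\cardS}$ net points). Your closing remark on the probability bookkeeping is also handled correctly, since $\cardS = (1-\varepsilon)T \gtrsim \log T$ whenever $\varepsilon$ is bounded away from $1$, so the net failure probability $e^{-c'(p+\cardS)}$ is absorbed into the stated $e^{-C(p+\log T)}$.
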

\begin{proof}
    This is a standard result of linear regression. See \citet[Theorem 5.39]{Vershynin2010IntroductionToThe}, or \citet[Appendix D.8]{TianGuFeng2023LearningFromSimilar} for a proof.
\end{proof}

\begin{lemma}[Theorem 6.5 in \cite{wainwright2019high}, Lemma 18 in \citet{TianGuFeng2023LearningFromSimilar}]
\label{lemma:cov-hat}
	Under Assumptions \ref{assump:covariate} and \ref{assump:n}, for any $\delta > 0$ and any $t \in [T]$, w.p. at least $1-C_1e^{-nC_2(\delta\wedge \delta^2)}$,
	\begin{equation}
		\twonorm{\hSigmak{t}-\bSigmak{t}} \leq C_3\sqrt{\frac{p}{n}} + \delta,
	\end{equation}
	with some universal constants $C_1, C_2, C_3 > 0$. 
    As a corollary, w.p. at least $1-e^{-C(p+\log T)}$,
	\begin{equation}
		\max_{t\in [T]}\twonorm{\hSigmak{t}-\bSigmak{t}} \lesssim \sqrt{\frac{p+\log T}{n}}.
	\end{equation}
\end{lemma}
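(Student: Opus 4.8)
The plan is to prove the per-task bound first and then pass to the uniform bound by a union over $t \in [T]$; since the two displayed inequalities are exactly Theorem 6.5 in \cite{wainwright2019high}, the argument I would give is the standard concentration proof for sample covariances of sub-Gaussian vectors. Fix $t \in [T]$ and let $\bxk{t}_i$ be the i.i.d. sub-Gaussian covariates with covariance $\bSigmak{t}$. Whitening via $z_i = (\bSigmak{t})^{-1/2}\bxk{t}_i$ produces isotropic sub-Gaussian vectors, and
\[
\hSigmak{t} - \bSigmak{t} = (\bSigmak{t})^{1/2}\Big(\tfrac1n\textstyle\sum_{i=1}^n z_i z_i^\top - \bm I_p\Big)(\bSigmak{t})^{1/2},
\]
so by Assumption \ref{assump:covariate}, $\twonorm{\hSigmak{t}-\bSigmak{t}} \le \lambda_{\max}(\bSigmak{t})\,\opnorm{\tfrac1n\sum_i z_iz_i^\top - \bm I_p} \le c_{\max}\,\opnorm{\tfrac1n\sum_i z_iz_i^\top - \bm I_p}$. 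It therefore suffices to control the isotropic sample covariance.

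For that I would use the variational form $\opnorm{\tfrac1n\sum_i z_iz_i^\top - \bm I_p} = \sup_{v\in S^{p-1}}\big|\tfrac1n\sum_i(\langle z_i,v\rangle^2-1)\big|$, discretize $S^{p-1}$ by a $1/4$-net $\mathcal N$ with $|\mathcal N|\le 9^p$, and invoke the comparison $\opnorm{M}\le 2\max_{v\in\mathcal N}|v^\top M v|$ valid for symmetric $M$. For each fixed $v$ the variables $\langle z_i,v\rangle^2-1$ are centered sub-exponential, so Bernstein's inequality gives $\Prob\big(|\tfrac1n\sum_i(\langle z_i,v\rangle^2-1)|>s\big)\le 2\exp(-c\,n(s^2\wedge s))$, and a union bound over $\mathcal N$ yields
\[
\Prob\Big(\opnorm{\tfrac1n\textstyle\sum_i z_iz_i^\top - \bm I_p} > s\Big) \le 2\cdot 9^p\,\exp\!\big(-c\,n(s^2\wedge s)\big).
\]
Taking $s = C_3\sqrt{p/n}+\delta$ absorbs the $9^p$ entropy factor into the $\sqrt{p/n}$ term and leaves failure probability $\le C_1 e^{-nC_2(\delta\wedge\delta^2)}$, which after rescaling by $c_{\max}$ is the claimed per-task bound.

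For the corollary I would instantiate $\delta \asymp \sqrt{(p+\log T)/n}$. Assumption \ref{assump:n}, namely $n\ge C(p+\log T)$, forces $\delta\lesssim 1$, so $\delta\wedge\delta^2=\delta^2\asymp (p+\log T)/n$ and the per-task failure probability is $\le C_1 e^{-C_2'(p+\log T)}$. A union bound over the $T = e^{\log T}$ tasks then gives total failure probability $\le C_1 e^{\log T}e^{-C_2'(p+\log T)} \le e^{-C(p+\log T)}$, provided the Bernstein constant (equivalently, the constant multiplying $\delta$) is taken large enough to make the net exponent dominate the $\log T$ term; on the complementary good event $\max_{t}\twonorm{\hSigmak{t}-\bSigmak{t}} \le C_3\sqrt{p/n}+\delta \lesssim \sqrt{(p+\log T)/n}$.

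I do not expect a genuine analytic obstacle, since this is a cited textbook estimate. The only step needing care is the bookkeeping in the uniform bound: one must verify that the exponent $C_2(\delta\wedge\delta^2)$ outpaces the $\log T$ entropy generated by ranging over the $T$ tasks, which is precisely where Assumption \ref{assump:n} is used to collapse $\delta\wedge\delta^2$ to $\delta^2$ and where enlarging $\delta$ by a constant factor secures a strictly negative net exponent.
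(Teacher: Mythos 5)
Your proposal is correct and is exactly the standard whitening--net--Bernstein argument behind Theorem 6.5 of \cite{wainwright2019high}; the paper itself supplies no proof beyond the citation, so your write-up simply fills in the cited textbook estimate, and the union-bound bookkeeping for the corollary (choosing $\delta \asymp \sqrt{(p+\log T)/n}$ and using Assumption \ref{assump:n} to collapse $\delta\wedge\delta^2$ to $\delta^2$) is handled correctly.
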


\begin{lemma}[Lemma 38 in \citet{TianGuFeng2023LearningFromSimilar}]
\label{lemma:safe net mtl}
	Under Assumptions \ref{assump:covariate}-\ref{assump:n}, 
	\begin{enumerate}[(i)]
		\item For all $t \in S$, when $\frac{\gamma}{\sqrt{n}} \geq \twonorm{\nabla \fk{t}(\bAks{t}\bthetaks{t})} + C\twonorm{
        \Abarhat
        \hthetak{t} - \bAks{t}\bthetaks{t}}$, it holds that $\hbetak{t} = 
        \Abarhat
        \hthetak{t}$ \wpp;
		\item $\max_{t \in S}\twonorm{\hbetak{t} - \bbetaks{t}} \leq C\frac{\gamma}{\sqrt{n}} + \max_{t \in S}\twonorm{\widetilde{\bbeta}^{(t)} - \bbetaks{t}}$ \wpp, where $\widetilde{\bbeta}^{(t)} \in \argmin_{\bbeta \in \mathbb{R}^p}\fk{t}(\bbeta)$;
		\item If the data from tasks in $S^c$ satisfies the linear model 
        and Assumption \ref{assump:covariate}, then \wpp, $\max_{t \in S^c}\twonorm{\hbetak{t} - \bbetaks{t}} \leq C\frac{\gamma}{\sqrt{n}} + \max_{t \in S^c}\twonorm{\widetilde{\bbeta}^{(t)} - \bbetaks{t}}$, where $\widetilde{\bbeta}^{(t)} \in \argmin_{\bbeta \in \mathbb{R}^p}\fk{t}(\bbeta)$.
	\end{enumerate}
\end{lemma}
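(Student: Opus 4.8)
The plan is to handle the three claims separately, exploiting that Step 3 of RAS is a biased-regularized least-squares problem whose smooth part $\fk{t}$ is both strongly convex and smooth under Assumptions \ref{assump:covariate}--\ref{assump:n}. As a preliminary I would invoke the corollary of Lemma \ref{lemma:cov-hat}: \wpp, $\max_{t\in[T]}\twonorm{\hSigmak{t}-\bSigmak{t}}\lesssim\sqrt{(p+\log T)/n}$, so that Assumption \ref{assump:n} absorbs this deviation into the spectral gap and yields $c_{\min}/2\le\lambda_{\min}(\hSigmak{t})\le\lambda_{\max}(\hSigmak{t})\le 2c_{\max}$ uniformly in $t$. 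Consequently each $\fk{t}$ is $\mu$-strongly convex and $L$-smooth with $\mu\asymp c_{\min}$ and $L\asymp c_{\max}$, and for the squared loss one has the exact gradient identity $\nabla\fk{t}(\bbeta)=\hSigmak{t}(\bbeta-\bbetaks{t})+\nabla\fk{t}(\bbetaks{t})$, which is the key algebraic fact driving all three parts.

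For claim (i) I would write the first-order (subgradient) optimality condition for the nonsmooth objective $g^{(t)}(\bbeta)=\fk{t}(\bbeta)+\frac{\gamma}{\sqrt n}\twonorm{\bbeta-\Abarhat\hthetak{t}}$. Because the penalty is the unsquared Euclidean norm, its subdifferential at the anchor $\Abarhat\hthetak{t}$ is the ball $\{v:\twonorm{v}\le\gamma/\sqrt n\}$, so $\bbeta=\Abarhat\hthetak{t}$ is a global minimizer if and only if $\twonorm{\nabla\fk{t}(\Abarhat\hthetak{t})}\le\gamma/\sqrt n$. Using the gradient identity and the smoothness bound gives $\twonorm{\nabla\fk{t}(\Abarhat\hthetak{t})}\le\twonorm{\nabla\fk{t}(\bAks{t}\bthetaks{t})}+\opnorm{\hSigmak{t}}\twonorm{\Abarhat\hthetak{t}-\bAks{t}\bthetaks{t}}$, so the stated hypothesis $\frac{\gamma}{\sqrt n}\ge\twonorm{\nabla\fk{t}(\bAks{t}\bthetaks{t})}+C\twonorm{\Abarhat\hthetak{t}-\bAks{t}\bthetaks{t}}$ with $C\asymp c_{\max}$ verifies the optimality certificate, forcing $\hbetak{t}=\Abarhat\hthetak{t}$.

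For claims (ii) and (iii) I would run the same safe-fallback comparison against the unconstrained minimizer $\wtbbetak{t}\in\argmin_{\bbeta}\fk{t}(\bbeta)$. Optimality of $\hbetak{t}$ for $g^{(t)}$ together with the reverse triangle inequality on the penalty gives $\fk{t}(\hbetak{t})-\fk{t}(\wtbbetak{t})\le\frac{\gamma}{\sqrt n}\big(\twonorm{\wtbbetak{t}-\Abarhat\hthetak{t}}-\twonorm{\hbetak{t}-\Abarhat\hthetak{t}}\big)\le\frac{\gamma}{\sqrt n}\twonorm{\hbetak{t}-\wtbbetak{t}}$, while $\mu$-strong convexity about its own minimizer gives $\fk{t}(\hbetak{t})-\fk{t}(\wtbbetak{t})\ge\frac{\mu}{2}\twonorm{\hbetak{t}-\wtbbetak{t}}^2$. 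Cancelling one factor yields $\twonorm{\hbetak{t}-\wtbbetak{t}}\le\frac{2\gamma}{\mu\sqrt n}\lesssim\gamma/\sqrt n$, and the triangle inequality delivers $\twonorm{\hbetak{t}-\bbetaks{t}}\le C\gamma/\sqrt n+\twonorm{\wtbbetak{t}-\bbetaks{t}}$. Claim (iii) is identical once one uses the stated hypothesis that the $S^c$-tasks also obey the linear model and Assumption \ref{assump:covariate}, so their $\hSigmak{t}$ concentrate as well; taking a maximum over the appropriate index set gives the \wpp{} bounds.

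The main obstacle is making the strong-convexity and smoothness constants uniform over all $T$ tasks at the claimed probability level, which I would control entirely through the union-bounded corollary of Lemma \ref{lemma:cov-hat}, relying on Assumption \ref{assump:n} to keep $\sqrt{(p+\log T)/n}$ below $c_{\min}/2$. A secondary subtlety is that the anchor $\Abarhat\hthetak{t}$ is itself data-dependent, but this causes no difficulty here: claim (i) uses only the deterministic subgradient characterization at a fixed anchor, and claims (ii)--(iii) never use any property of $\hthetak{t}$ beyond the reverse triangle inequality, so no extra concentration for $\hthetak{t}$ is required within this lemma.
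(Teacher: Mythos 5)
The paper does not actually prove this lemma: it is imported verbatim as Lemma~38 of \citet{TianGuFeng2023LearningFromSimilar} and used as a black box, so there is no in-paper argument to compare yours against. Your self-contained proof is correct and follows the standard route for such ``safe-net'' results. Part (i) is exactly the subgradient optimality certificate: since the penalty is the \emph{unsquared} norm, $\partial\twonorm{\cdot-\Abarhat\hthetak{t}}$ at the anchor is the unit ball, so the anchor is the (unique, by strong convexity of $\fk{t}$ on the high-probability event of Lemma~\ref{lemma:cov-hat}) minimizer as soon as $\twonorm{\nabla\fk{t}(\Abarhat\hthetak{t})}\le\gamma/\sqrt{n}$, and the exact quadratic-gradient identity $\nabla\fk{t}(\bbeta)=\hSigmak{t}(\bbeta-\bbetaks{t})+\nabla\fk{t}(\bbetaks{t})$ converts the hypothesis into this certificate with $C\asymp c_{\max}$. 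Parts (ii)--(iii) are the standard comparison against the unpenalized minimizer: the one-sided optimality inequality plus the reverse triangle inequality on the penalty, combined with $\lambda_{\min}(\hSigmak{t})\ge c_{\min}/2$ uniformly over $t$ (union bound from Lemma~\ref{lemma:cov-hat} plus Assumption~\ref{assump:n}), gives $\twonorm{\hbetak{t}-\widetilde{\bbeta}^{(t)}}\lesssim\gamma/\sqrt{n}$ with the anchor cancelling entirely, and the triangle inequality finishes. Your two flagged subtleties (uniformity of the convexity constants over $T$ tasks, and the data-dependence of the anchor being harmless) are exactly the right ones, and both are handled correctly. The only cosmetic remark is that the constant $C$ appearing in the hypothesis of (i) and the constants in the conclusions of (ii)--(iii) should be tied explicitly to $c_{\max}$ and $c_{\min}$ respectively, but this does not affect validity.
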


\begin{lemma}[Lemma 41 in \citet{TianGuFeng2023LearningFromSimilar}]\label{lemma:TL safe net mtl}
	Under Assumptions \ref{assump:TLcovariate}-\ref{assump:TLn}, we have:
	\begin{enumerate}[(i)]
		\item When $\frac{\gamma}{\sqrt{n_0}} \geq \twonorm{\nabla \fk{0}(\bAks{0}\bthetaks{0})} + C\twonorm{\hSigmak{0}}\cdot
    \twonorm{\hbarA
    \widehat{\btheta}^{(0)}
    - \bAks{0}\bthetaks{0}}$, it holds that $\hbetak{0} = \hbarA
    \widehat{\btheta}^{(0)}
    $ \wpp; 
		\item $\twonorm{\hbetak{0} - \bbetaks{0}} \leq C\frac{\gamma}{\sqrt{n_0}} + \twonorm{\widetilde{\bbeta}^{(0)} - \bbetaks{0}}$ \wpp, where $\widetilde{\bbeta}^{(0)} \in \argmin_{\bbeta \in \mathbb{R}^p}\fk{0}(\bbeta)$.
	\end{enumerate}
\end{lemma}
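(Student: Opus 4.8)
The plan is to exploit the convex structure of the Step-2 objective $F(\bbeta) = \fk{0}(\bbeta) + \frac{\gamma}{\sqrt{n_0}}\twonorm{\bbeta - \hbarA\hthetak{0}}$, a smooth quadratic data-fitting term plus a non-smooth $\ell_2$ penalty anchored at $\hbarA\hthetak{0}$. The single structural fact I would use repeatedly is that $\fk{0}$ is quadratic, so its gradient is affine with Hessian $\hSigmak{0}$, giving $\nabla\fk{0}(\bbeta_1) - \nabla\fk{0}(\bbeta_2) = \hSigmak{0}(\bbeta_1 - \bbeta_2)$ for all $\bbeta_1,\bbeta_2$. For part (i), the anchor $\hbarA\hthetak{0}$ is a global minimizer of the convex function $F$ if and only if $\bm 0 \in \partial F(\hbarA\hthetak{0})$; since the subdifferential of $\twonorm{\cdot}$ at the origin is the closed unit ball, this is equivalent to the single scalar condition $\twonorm{\nabla\fk{0}(\hbarA\hthetak{0})} \leq \gamma/\sqrt{n_0}$. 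I would then apply the affine-gradient identity at $\bbeta_1 = \hbarA\hthetak{0}$, $\bbeta_2 = \bAks{0}\bthetaks{0}$ together with the triangle inequality to get $\twonorm{\nabla\fk{0}(\hbarA\hthetak{0})} \leq \twonorm{\nabla\fk{0}(\bAks{0}\bthetaks{0})} + \twonorm{\hSigmak{0}}\twonorm{\hbarA\hthetak{0} - \bAks{0}\bthetaks{0}}$, which by the hypothesis of (i) (carrying a constant $C\geq 1$) is $\leq \gamma/\sqrt{n_0}$; hence the optimality inclusion holds and $\hbetak{0} = \hbarA\hthetak{0}$.

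For part (ii) I would run the standard basic-inequality argument against the unconstrained OLS minimizer $\widetilde{\bbeta}^{(0)} \in \argmin_{\bbeta} \fk{0}(\bbeta)$, which satisfies $\nabla\fk{0}(\widetilde{\bbeta}^{(0)}) = \bm 0$. Because $\fk{0}$ is quadratic, $\fk{0}(\hbetak{0}) - \fk{0}(\widetilde{\bbeta}^{(0)}) = \tfrac12 (\hbetak{0} - \widetilde{\bbeta}^{(0)})^\top \hSigmak{0} (\hbetak{0} - \widetilde{\bbeta}^{(0)}) \geq \tfrac12 \lambda_{\min}(\hSigmak{0}) \twonorm{\hbetak{0} - \widetilde{\bbeta}^{(0)}}^2$. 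On the other side, optimality $F(\hbetak{0}) \leq F(\widetilde{\bbeta}^{(0)})$ together with the reverse triangle inequality applied to the two penalty terms (both anchored at $\hbarA\hthetak{0}$) gives $\fk{0}(\hbetak{0}) - \fk{0}(\widetilde{\bbeta}^{(0)}) \leq \frac{\gamma}{\sqrt{n_0}} \twonorm{\hbetak{0} - \widetilde{\bbeta}^{(0)}}$. Chaining the two displays and dividing by $\twonorm{\hbetak{0} - \widetilde{\bbeta}^{(0)}}$ (the bound being trivial when this vanishes) yields $\twonorm{\hbetak{0} - \widetilde{\bbeta}^{(0)}} \lesssim \gamma/(\lambda_{\min}(\hSigmak{0})\sqrt{n_0})$, and a final triangle inequality $\twonorm{\hbetak{0} - \bbetaks{0}} \leq \twonorm{\hbetak{0} - \widetilde{\bbeta}^{(0)}} + \twonorm{\widetilde{\bbeta}^{(0)} - \bbetaks{0}}$ produces the claimed form.

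The only genuinely probabilistic ingredient—and the main thing to get right—is converting the deterministic bounds above into high-probability ones by controlling the extreme eigenvalues of $\hSigmak{0}$. Applying the sample-covariance concentration of Lemma \ref{lemma:cov-hat} to the target task under Assumption \ref{assump:TLcovariate} gives $\twonorm{\hSigmak{0} - \bSigmak{0}} \lesssim \sqrt{p/n_0}$ on an event of probability at least $1 - e^{-C(p+\log T)}$; under Assumption \ref{assump:TLn} ($n_0 \geq Cp$ with $C$ large), this deviation is at most $c_{\min}/2$, so on that event $\lambda_{\min}(\hSigmak{0}) \geq c_{\min}/2$ and $\twonorm{\hSigmak{0}} \leq 2c_{\max}$. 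Substituting these into the two arguments turns (i) and (ii) into the stated high-probability conclusions, with the absolute constant $C$ in each part absorbing $c_{\max}$ and $1/c_{\min}$. I expect no serious obstacle beyond this eigenvalue control; the only points needing a word of care are the degenerate case $\hbetak{0} = \widetilde{\bbeta}^{(0)}$ in (ii) and the fact that the subdifferential inclusion in (i) certifies global (not merely local) optimality, both of which follow immediately from convexity.
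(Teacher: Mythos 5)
Your proof is correct, but there is nothing in the paper to compare it against: the paper does not prove this lemma at all. It is imported verbatim as Lemma 41 of \citet{TianGuFeng2023LearningFromSimilar}, stated in the appendix purely as an ingredient for the proof of Proposition \ref{thm:TL-error}. What you have done is reconstruct a self-contained argument for the imported result, and the reconstruction is sound: for (i), the first-order condition $\bm 0 \in \nabla \fk{0}(\hbarA\hthetak{0}) + \frac{\gamma}{\sqrt{n_0}}\partial\twonorm{\cdot}$ reduces, via the unit-ball subdifferential of the $\ell_2$ norm at the origin, to $\twonorm{\nabla \fk{0}(\hbarA\hthetak{0})} \le \gamma/\sqrt{n_0}$, and the exact affine-gradient identity $\nabla\fk{0}(\bbeta_1)-\nabla\fk{0}(\bbeta_2)=\hSigmak{0}(\bbeta_1-\bbeta_2)$ of the quadratic loss converts the lemma's hypothesis (with $C\ge 1$) into exactly this condition; for (ii), the basic inequality against $\widetilde{\bbeta}^{(0)}$, the reverse triangle inequality on the two penalty terms, and the exact second-order expansion $\fk{0}(\hbetak{0})-\fk{0}(\widetilde{\bbeta}^{(0)})\ge \tfrac12\lambda_{\min}(\hSigmak{0})\twonorm{\hbetak{0}-\widetilde{\bbeta}^{(0)}}^2$ give $\twonorm{\hbetak{0}-\widetilde{\bbeta}^{(0)}}\lesssim \gamma/(\lambda_{\min}(\hSigmak{0})\sqrt{n_0})$, after which the triangle inequality finishes. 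Relative to a bare citation, your version makes explicit where each assumption enters, which is genuinely useful.

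Two small points to tighten. First, in (i) the subdifferential inclusion certifies only that the anchor is \emph{a} global minimizer; to conclude $\hbetak{0}=\hbarA\hthetak{0}$ you also need uniqueness of the minimizer, i.e.\ strict convexity, and this is precisely what your high-probability event $\lambda_{\min}(\hSigmak{0})\ge c_{\min}/2>0$ buys — you invoke the event but should say explicitly that this is the only reason (i) is a probabilistic rather than deterministic statement. Second, the probability bookkeeping: Lemma \ref{lemma:cov-hat} as stated covers the $T$ source tasks with $n$ samples each; for the single target task with $n_0$ samples the same concentration gives an event of probability $1-e^{-cn_0}\ge 1-e^{-c'p}$ under Assumption \ref{assump:TLn}, and the extra $\log T$ in the advertised exponent $1-e^{-C(p+\log T)}$ would require $n_0\gtrsim \log T$ as well. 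This looseness is inherited from the lemma as quoted (and from \citet{TianGuFeng2023LearningFromSimilar}), so it is not a defect of your argument, but you should not claim the $\log T$ term comes for free from Lemma \ref{lemma:cov-hat} applied to the target task.
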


\end{document}